
\documentclass{article}


\usepackage{microtype}
\usepackage[utf8]{inputenc} 
\usepackage[T1]{fontenc}
\usepackage[T1]{fontenc}
\usepackage{tikz}
\usetikzlibrary{arrows}
\usetikzlibrary{calc}
\usetikzlibrary{snakes}
\usepackage{cite}
\usepackage{graphicx}
\usepackage{array}
\usepackage{mdwmath}
\usepackage{mdwtab}
\usepackage{eqparbox}
\usepackage{fixltx2e}
\usepackage{url}
\usepackage{pgf}
\usepackage{lipsum}
\usepackage{multirow}
\usepackage{xspace}
\usepackage{xcolor}
\usepackage{bm}
\usepackage{bbm}
\usepackage{booktabs}
\usepackage{etoolbox}
\usepackage{makecell}
\usepackage{nicefrac}
\usepackage{textcomp}
\usepackage{stmaryrd}
\usepackage{mathrsfs}
\usepackage{paralist}
\usepackage{comment}
\usepackage{amsfonts,amssymb}
\usepackage[font=footnotesize,labelsep=space,labelfont=bf]{caption}

\hyphenation{op-tical net-works semi-conduc-tor}



\makeatletter

\newcommand{\Rmnum}[1]{\expandafter\@slowromancap\romannumeral #1@}
\makeatother

\newtheorem{theorem}{Theorem}
\newtheorem{lemma}{Lemma}
\newtheorem{claim}{Claim}
\newenvironment{proof}{\textit{Proof}:}{\hfill$\square$}
\newtheorem{proposition}{Proposition}
\newtheorem{assumption}{Assumption}

\usepackage[colorlinks=true, citecolor=ceruleanblue, linkcolor = bluepigment ,urlcolor=blue, pagebackref=false]{hyperref}



\newcommand{\verts}[1]{\Vert #1 \Vert}

\newcommand{\btheta}{\ensuremath{{ \boldsymbol{\theta} }}\xspace}

\newcommand{\by}{\ensuremath{{\bf y}}\xspace}

\newcommand{\bV}{\ensuremath{{\bf V}}\xspace}

\newcommand{\bbE}{\ensuremath{\mathbb{E}}\xspace}

\newcommand{\bbR}{\ensuremath{\mathbb{R}}}

\newcommand{\bfP}{\ensuremath{\mathbf{P}}}
\newcommand{\bfB}{\ensuremath{\mathbf{B}}}
\newcommand{\bfV}{\ensuremath{\mathbf{V}}}
\newcommand{\bfhV}{\ensuremath{\widehat{\mathbf{V}}}}

\newcommand{\bfA}{\ensuremath{\mathbf{A}}}
\newcommand{\bfC}{\ensuremath{\mathbf{C}}}
\newcommand{\bfR}{\ensuremath{\mathbf{R}}}
\newcommand{\bfbR}{\ensuremath{\bar{\mathbf{R}}}}
\newcommand{\bfbU}{\ensuremath{\bar{\mathbf{U}}}}

\newcommand{\bfhB}{\ensuremath{\widehat{\mathbf{B}}}}
\newcommand{\bfw}{\ensuremath{\mathbf{w}}}
\newcommand{\bfr}{\ensuremath{\mathbf{r}}}

\newcommand{\bfb}{\ensuremath{\mathbf{b}}}
\newcommand{\bfhw}{\ensuremath{\widehat{\mathbf{w}}}}

\newcommand{\bfx}{\ensuremath{\mathbf{x}}}
\newcommand{\bfbx}{\ensuremath{\bar{\mathbf{x}}}}
\newcommand{\bfX}{\ensuremath{\mathbf{X}}}
\newcommand{\bfU}{\ensuremath{\mathbf{U}}}
\newcommand{\bfI}{\ensuremath{\mathbf{I}}}
\newcommand{\bfQ}{\ensuremath{\mathbf{Q}}}
\newcommand{\bfy}{\ensuremath{\mathbf{y}}}
\newcommand{\bfZ}{\ensuremath{\mathbf{Z}}}
\newcommand{\bfz}{\ensuremath{\mathbf{z}}}
\newcommand{\bfu}{\ensuremath{\mathbf{u}}}
\newcommand{\bfW}{\ensuremath{\mathbf{W}}}

\newcommand{\bDelta}{\ensuremath{\boldsymbol{\Delta}}}

\newcommand{\lragnle}[1]{\left\langle #1 \right\rangle}
\newcommand{\fverts}[1]{\left\Vert #1 \right\Vert}

\renewcommand{\paragraph}[1]{\smallskip\noindent{\bf #1}~}

\usepackage{url}  
\usepackage{hyperref}            
\usepackage{ifthen}          
\usepackage{cite}             

\usepackage{tabularx}
\usepackage[cmex10]{amsmath}  
\interdisplaylinepenalty=1000 
\usepackage{mleftright}       
\mleftright                   

\usepackage{graphicx}         
\usepackage{booktabs}         

\usepackage{algorithmicx}    




\hyphenation{op-tical net-works semi-conduc-tor}

\usepackage{authblk}
\usepackage[margin=1in]{geometry}
\definecolor{brightmaroon}{rgb}{0.76, 0.13, 0.28}
\definecolor{ceruleanblue}{rgb}{0.16, 0.32, 0.75}
\definecolor{bluepigment}{rgb}{0.2, 0.2, 0.6}
\definecolor{amaranth}{rgb}{0.9, 0.17, 0.31}
\definecolor{auburn}{rgb}{0.43, 0.21, 0.1}
\definecolor{burntumber}{rgb}{0.54, 0.2, 0.14}

\begin{document}

\title{Representation Transfer Learning via Multiple Pre-trained models for Linear Regression} 

%
\author[1]{Navjot Singh}
\author[1]{Suhas Diggavi}

\affil[1]{University of California, Los Angeles, USA}
\affil[1] {\text{navjotsingh@ucla.edu,  suhas@ee.ucla.edu}\vspace{0.25cm}}

\date{\vspace{-5ex}}
\maketitle

\begin{abstract}
    In this paper, we consider the problem of learning a linear regression model on a data domain of interest (target) given few samples. To aid learning, we are provided with a set of pre-trained regression models that are trained on potentially different data domains (sources). Assuming a representation structure for the data generating linear models at the sources and the target domains, we propose a representation transfer based learning method for constructing the target model. The proposed scheme is comprised of two phases: (i) utilizing the different source representations to construct a representation that is adapted to the target data, and (ii) using the obtained model as an initialization to a fine-tuning procedure that re-trains the entire (over-parameterized) regression model on the target data. For each phase of the training method, we provide excess risk bounds for the learned model compared to the true data generating target model. 
    The derived bounds show a gain in sample complexity for our proposed method compared to the baseline method of not leveraging source representations when achieving the same excess risk, therefore, theoretically demonstrating the effectiveness of transfer learning for linear regression. 
\end{abstract}

\section{Introduction} \label{sec:intro}
A critical challenge for Deep Learning applications is the scarcity of available labeled data to train large scale models that generalize well to the data distribution. This is captured under the framework of \emph{Few-Shot Learning} where Transfer Learning has emerged as an attractive framework to address this issue \cite{liu2018meta}. In transfer learning, one typically has access to a model trained on some data domain (hereby called \emph{source domain}) that can be adapted to the data domain of interest (\emph{target domain}). Within this context, a recently proposed strategy is that of \emph{representation transfer learning} \cite{bengio2013representation, bengio2012deep}, where one typically assumes a shared structure between the source and target learning tasks. The idea is to then learn a feature mapping for the underlying model (e.g. Neural Network representations) using the sample rich source domain that can be utilized directly on the target domain, for e.g, by training a few layers on top of the obtained network representation. This adaptation utilizes much fewer samples than what is required for training the entire model from scratch, while achieving good generalization performance which has been empirically observed for various large-scale machine learning application including image, speech and language \cite{liu2018meta,jia2018transfer,liu2020representation} tasks.

A defining factor in the need for representation transfer methods is that the source and target domains have different distributions. Learning across different domains has been studied extensively in the context of \emph{Domain Adaptation} (see for e.g. \cite{blitzer2007learning,mansour2009domain}) where it is usually assumed that source and target domain data can be accessed simultaneously. However, in many important practical scenarios of interest, the target data samples (labeled or unlabeled) are not available when training the source models. Transferring the source dataset to the target deployment scenario is infeasible for modern large-scale applications and violates data privacy. Thus there has been an increasing interest in transferring pre-trained source models to the target domain for sample efficient learning. 

Despite the immense empirical success of representation transfer learning, development of a theory for understanding the generalization of representation learning methods and the sample complexity requirements is still in its infancy. Recent efforts in this direction have been made in understanding generalization for the simpler case of linear models \cite{du2020few,chua2021fine,tripuraneni2021provable, maurer2016benefit}. Within these works, \cite{du2020few,tripuraneni2021provable, maurer2016benefit} consider a common low-dimensional representation in the data generation process for the source and target domains, while \cite{chua2021fine} allows for the general case of data-generating representations being different. However, the analysis presented in that work requires the number of samples at the target to scale with the dimension of the model (see \cite[Theorem 3.1]{chua2021fine}), which is impractical for few-shot learning scenarios.

A related line of work for understanding generalization of large scale models in the small sample regime is through the lens of \emph{benign overfitting}. This is inspired by the surprising (empirical) observation that many large models, even when they overfit, tend to generalize well on the data distribution \cite{belkin2018overfitting, arora2019fine}. In this context, \cite{bartlett2020benign, hastie2019surprises, shah2020generalization} study this phenomena for linear models and analyze the generalization properties of the \emph{min-norm solution}, where optimization methods like Gradient Descent are known to converge to in this setting \cite{wu2020direction,gunasekar2018characterizing}. Specifically, these works seek to understand how the data distribution affects the excess population risk of the min-norm solution relative to the true data-generating linear model.

In this paper, we make efforts to understanding the generalization of linear models while leveraging pre-trained models inspired by the notions of representation transfer learning and benign-overfitting discussed above. These ideas lend themselves organically to the construction of a sample efficient training method for the target which we describe below briefly, along with our contributions.

\paragraph{Key Contributions:}
Our work provides a method for leveraging multiple pre-trained models for linear regression objectives (of dimension $d$) on a target task of interest in the small sample regime (samples $n_T \ll d$). The proposed two-phase approach leverages representation transfer (Phase 1) and over-parameterized training (Phase 2) to construct the target model, and we provide theoretical bounds for the excess risk for each phase of the training process (Theorem~\ref{thm:low-dim-phase1} and Theorem~\ref{thm:low-dim-combined}). In particular, we show that the learned model after the first phase has an excess risk of $\mathcal{O}\left(\nicefrac{q}{n_T}\right) + \epsilon$, where $q$ is dimension of the subspace spanned by learned source representations and $\epsilon$ is a constant that captures the approximation error when utilizing source representations for the target model (c.f. Assumption~\ref{assump:source_target_map}). This provides a gain in sample complexity compared to the baseline $\mathcal{O}\left(\nicefrac{d}{n_T}\right)$ when learning the target model from scratch when the given source representations span a subspace of dimension much smaller than $d$ (i.e. $q \ll d$). For the case when all representations are the same ($\epsilon=0$), we recover the result of \cite{du2020few} for a \emph{single} common representation. Similarly, for the overall model obtained after the second phase, we provide conditions on the target data distribution and the source/target representations that lead to an excess risk much smaller than $\mathcal{O}\left(\nicefrac{d}{n_T}\right)$. Thus, we theoretically demonstrate the benefit of leveraging pre-trained models for linear regression.

\subsection{Related Work}
The problem of learning with few samples has been studied under the framework of \emph{Few-Shot learning}, where \emph{Meta-learning}-- using experiences from previously encountered learning tasks to adapt to new tasks quickly \cite{finn2017model}, and \emph{Transfer-learning}-- transferring model parameters and employing pre-training or fine-tuning methods \cite{bengio2012deep}, are two major approaches. Theoretical works on Meta-learning algorithms typically assume some relation between the distribution of source and target tasks, for e.g., being sampled from the same task distribution. A more general framework is that of \emph{Out-of-Domain (OOD)} generalization, where the goal is to learn models in a manner that generalize well to unseen data domains \cite{wang2022generalizing}.
 
Transfer learning, especially, representation transfer learning has shown empirical success for large-scale machine learning \cite{bengio2013representation}, however, theoretical works on understanding generalization in this setting are few; see \cite{mcnamara2017risk,galanti2016theoretical,maurer2016benefit}. A related line of work is representation learning in context of \emph{Domain Adaptation (DA)}, see for e.g. \cite{zhao2019learning,bendavid2006lanalysis, stojanov2021domain, zhang2019bridging}. However, this usually assumes that source and data domains can be accessed simultaneously. There are deviations from this theme in \emph{Multi-Source DA} where the goal is to understand how multiple source models can be combined to generalize well on a target domain of interest, although without changing the learned model based on the target samples \cite{mansour2008domain, liang2020we, ahmed2021unsupervised}.  

In context of leveraging pre-trained models for linear regression, our work is most closely related to \cite{du2020few,chua2021fine} that theoretically analyze representation transfer for linear models. In contrast to \cite{du2020few}, we allow for the true target representations to be different among the source models as well as the target, and introduce a notion of closeness between these representations (c.f. Assumption~\ref{assump:source_target_map}). Although a similar setting was considered in \cite{chua2021fine} where representations are assumed to be close in the $\ell_2$ norm, their resulting bound for the fine-tuned model risk shows that the required number of target samples scale with the dimension of the learned model for efficient transfer \cite[Theorem 3.1]{chua2021fine}. In contrast, the proposed method in our work provides analysis relating these bounds to the properties of the target data distribution taking inspiration from works on benign overfitting for linear regression \cite{bartlett2020benign,gunasekar2018characterizing, shachaf2021theoretical}. This enables us to identify conditions on the target data distribution that allow the required target samples to be much smaller than the overall model dimension (see Theorem~\ref{thm:low-dim-combined}).

\subsection{Paper Organization}
We set up the problem and define the notation we use throughout the paper in Section~\ref{sec:setup}. Section~\ref{sec:algo} describes our training method for the target task model when given access to multiple pre-trained source models. Section~\ref{sec:main-results} establishes excess risk bounds of our proposed scheme, which are proved in Sections~\ref{sec:proof-phase1} and Section~\ref{sec:proof-phase2}. Section~\ref{sec:expts} provides numerical results and some concluding remarks are presented in Section~\ref{sec:conclusion}.

\section{Problem Setup and Notation} \label{sec:setup}
\paragraph{Notation:} We use boldface for vectors and matrices, with matrices in uppercase. For a matrix $\mathbf{A}$, we denote the projection matrix onto its column space by $\bfP_{\mathbf{A}} := \mathbf{A} ( \mathbf{A}^{\top} \mathbf{A} )^{\dagger} \mathbf{A}^{\top}  $ where $\bfW^{\dagger}$ denotes the Moore-Penrose pseudo-inverse of the matrix $\bfW$. We define $\bfP^{\perp}_{\mathbf{A}} : = \mathbf{I} - \bfP_{\mathbf{A}}$, where $\bfI$ denotes the identity matrix of appropriate dimensions. We denote by $\mathcal{C}(\bfA)$ the column space of a matrix $\bfA$ and by $\sigma_i(\bfA)$, $\lambda_i(\bfA)$ its $i^{th}$ largest singular value and eigenvalue, respectively. $\fverts{.}_F$ denotes the Frobenius norm. For a vector $\mathbf{v}$, $\verts{\mathbf{v}}_2$ denotes the $\ell_2$ norm, while for a matrix $\mathbf{V}$, $\verts{\mathbf{V}}_2$ denotes the spectral norm. $\text{Tr}[ \, . \, ]$ denotes the trace operation. $\lesssim$ denotes the inequality sign where we ignore the constant factors. The notation ${\mathcal{O}}$ is the `big-O' notation and we define $[m] = \{1,2,\hdots,m\}$.

\paragraph{Setup:} We consider $m$ number of source tasks and a single target task.
We denote by $\mathcal{X} \subseteq \mathbb{R}^d$ the space of inputs and $\mathcal{Y} \subseteq \mathbb{R}$ the output space. The source and target tasks are associated with data	 distributions $p_i$, $i \in [m]$ and $p_T$, respectively, over the space $\mathcal{X}$. We assume a linear relationship between the input and output pairs for source task $i \in [m]$ given by:
\begin{align} \label{eq:source-data-model}
	y_i = \bfx_i^{\top} \bfB_i^{*} \bfw_{i}^{*} + z_i, \quad \btheta_i^{*}:=  \bfB_i^{*}\bfw_i^{*}
\end{align}
where $\bfx_i \in \mathcal{X} $ denotes an input feature vector drawn from distribution $p_i$, $y_i \in \mathcal{Y}$ is the output, and $z_i \sim \mathcal{N}(0,\sigma^2)$ denotes Gaussian noise. The associated true task parameter $\btheta_i^{*}:=  \bfB_i^{*}\bfw_i^{*}$ is comprised of the representation matrix $\bfB_i^{*} \in \mathbb{R}^{d \times k}$ which maps the input to a lower $k-$dimensional space (where $k \ll d$) and a head vector $\bfw_i^{*} \in \bbR^{k}$ mapping the intermediate sample representation to the output. The data generation process for the target task is defined similarly with distribution $p_T$ and associated target task parameter given by $\btheta_T^{*} = \bfB_T^{*}\bfw_T^{*}$. For sources $i \in [m]$, we define the input covariance matrix $\boldsymbol{\Sigma}_i = \bbE_{\bfx_i \sim p_i} [\bfx_i \bfx_i^{\top}] $ and similarly for the target distribution, $\boldsymbol{\Sigma}_T = \bbE_{\bfx_T \sim p_T} [\bfx_T \bfx_T^{\top}] $.\\
In our scenario of interest, the pre-trained models are trained `offline' on source distributions and are made available to the target task during deployment. That is, for training the target task, we have access to only the models learned by the source tasks and not the source datasets themselves. For learning the pre-trained source models, we assume $n_S$ number of samples for each of the source tasks (thus, $mn_S$ source task samples in total) denoted by the pair $(\bfX_i, \bfy_i)$ for source $i \in [m]$ where $\bfX_i \in \mathbb{R}^{n_S \times d}$ contains row-wise input feature vectors and $\bfy_i \in \mathbb{R}^{n_S}$ is the vector of corresponding outputs. We similarly have $n_T$ samples $(\bfX_T,\bfy_T)$ for the target task where $n_T \ll n_S$. We also assume $n_T \ll d$.\\
With the data generation process defined above, we now define the expected population risk on the target distribution for $\hat{\btheta}$:
\begin{align*}
	R(\hat{\btheta}) = \bbE_{\bfx \sim p_T}\bbE_{\bfy|\bfx^{\top}\btheta_T^{*}} [ (\bfy -  \bfx^{\top}\hat{\btheta} )^2 ]
\end{align*}
Our goal is to learn a model $\hat{\btheta}$ for the target task
 that generalizes well to the target data distribution. Thus, we want $\hat{\btheta}$ that minimizes the \emph{Expected Excess Risk} defined by:
\begin{align} \label{eq:targetEER}
	\text{EER}(\hat{\btheta},\btheta_{T}^{*}) := R(\hat{\btheta}) - R(\btheta_{T}^{*}) 
\end{align}
Since we are given access to only $n_T \ll d$ target samples, it is infeasible to learn a predictor from scratch that performs well for the excess risk defined in \eqref{eq:targetEER}. \\
To aid learning on the target, we have access to models learned on the source tasks. Specifically, the target has access to the trained source models representations $\{\bfhB_i\}_{i=1}^{m}$ that are solutions of the following empirical minimization problem:
\begin{align} \label{eq:source-train}
	\{ \bfhB_i ,\bfhw_{i} \}_{i=1}^{m} \leftarrow & \min_{\{\bfB_i\}  , \{\bfw_{i}\} }  \frac{1}{mn_S} \sum_{i=1}^{m} \fverts{  \bfy_i - \bfX_i \bfB_i \bfw_{i} }_2^2 
\end{align}
Since we have data rich source domains ($n_S \gg d$), we expect the obtained source models $\bfhB_i \bfhw_i$ to be close to $\btheta^{*}_i$ for $i \in [m]$ (c.f. Equation~\eqref{eq:source-data-model}). For effective representation transfer, we also want the learned representations $\{\bfhB_i\}$ to be close to the true representations $\{\bfB^{*}_i\}$, in the sense that they approximately span the same subspace. We make this notion precise in Lemma~\ref{lemm:rep-close} stated with our main results in Section~\ref{sec:main-results}.
Given access to the source model representations, our proposed method for training the target model leverages the representation maps $\{\bfhB_i\}_{i=1}^{m}$ to drastically reduce the sample complexity. We describe our training method in Section~\ref{sec:algo} and provide the excess risk bounds for the resulting target model in Section~\ref{sec:main-results}.


\section{Learning with Multiple Pre-trained models} \label{sec:algo}
To leverage source representations for training the target model, it is instinctive that there should be a notion of closeness between the true source and target model representation that can be exploited for target task training. We now make this notion precise. We first define as $\bfV^{*} \in \bbR^{d \times l}$ the matrix whose columns are an orthonormal basis of the set of columns of all the source representation matrices $\{\bfB_i^{*} \}_{i=1}^{m}$. The individual source models can thus be represented by $\btheta_i^{*} = \bfV^{*} \widetilde{\bfw}_i^{*}$ for all $i \in [m]$. The target model $ \btheta_T^{*} = \bfB_T^{*}\bfw_T^{*}$ governing the target data generation is assumed to satisfy the following:
\begin{assumption} \label{assump:source_target_map} 
	Consider the projection of the target model $\bfB^{*}_T \bfw_{T}^{*}$  to space $\mathcal{C}(\bfV^{*})$ given by $\bfV^{*}\widetilde{\bfw}_{T}^{*}$ for some $\widetilde{\bfw}_{T}^{*} \in \bbR^{l}$. Then for some $\epsilon>0$, we have:
	\begin{equation*}
		\bbE_{\bfx \sim p_{T}} \left[ \bfx^{\top} \bfV^{*} \widetilde{\bfw}_T^{*} - \bfx^{\top} \bfB_T^{*} \bfw_{T}^{*} \right]^2 \leq \epsilon^2
	\end{equation*}
	The value of $\epsilon$ in  Assumption~\ref{assump:source_target_map} above captures how far away the output of the true target model is to a model learned using the true source representations. Note that if the columns of $\bfB_T^{*}$ can be constructed by the vectors in $\bfV^{*}$, the above is satisfied for $\epsilon = 0$. Assumption~\ref{assump:source_target_map} can also be re-written as:
	\begin{equation}
		\fverts{\Sigma_T^{\nicefrac{1}{2}}  \left(\bfV^{*} \widetilde{\bfw}_T^{*} - \bfB_T^{*} \bfw_{T}^{*} \right) }_2^2 \leq \epsilon^2  
	\end{equation}
\end{assumption}
We are given access to $n_T$ samples for the target machine given by $ (\bfX_T , \bfy_T)$ and pre-trained models representations from the sources $\{\bfhB_i \}_{i=1}^{m}$ (c.f. Equation \eqref{eq:source-train}). Our proposed training scheme consists of two phases, which we will now describe independently in the following. We split the available $n_T$ target samples into $n_{T_1},n_{T_2}$ for the two respective phases. At a high level, in \emph{Phase 1}, we make use of the available source representations to construct a target representation and adapt it to the target task using $n_{T_1}$ samples. The obtained model is then used as an \emph{initialization} for \emph{Phase 2} where we train the entire (over-parameterized) model, including the representation matrix, using $n_{T_2}$ samples. We provide the resulting excess risk bounds for the model obtained after \emph{Phase 1} and the final target model after \emph{Phase 2} in Section~\ref{sec:main-results}.

%
%
%


\subsection{Phase 1: Transferring source representation to target} \label{subsec:phase1}
In the context of utilizing pre-trained models, we will make use of the empirical source representations $\{\bfhB_i \}_{i=1}^{m}$ to learn the target model. We first construct a matrix $\bfhV \in \bbR^{d \times q}$ whose columns are the orthonormal basis of the columns of $\{ \bfhB_i \}_{i=1}^{m}$ which denotes a dictionary of the learned source representation matrices\footnote{The construction of $\bfhV$ from $\{\bfhB_i\}$ can be done by the Gram-Schmidt process. This can be done in the pre-deployment phase after training the source models, and $\bfhV$ can be made available directly to the target task.}. Note that we have $q \leq mk$. Having constructed the representation, we train a head vector $\bfhw_{T_1} \in \bbR^{q}$ minimizing the empirical risk on $n_{T_1}$ samples:
\begin{equation} \label{eq:target_phase1}
	\bfhw_{T_1}  \leftarrow \min_{\bfw_T \in \bbR^{q} } \frac{1}{n_{T_1}} \fverts{ \bfy_{T_1} - \bfX_{T_1} \bfhV \bfw_T }_2^2
\end{equation}
Here, $\bfy_{T_1} \in \bbR^{n_{T_1}}$ denotes the first $n_{T_1}$ values of $\bfy_T$ and $\bfX_{T_{1}} \in \bbR^{n_{T_1} \times d}$ the first $n_{T_1}$ rows of $\bfX_T$. We denote the resulting model at the end of this phase by $\btheta_{\text{Phase}_1}:= \bfhV \bfhw_{T_1}$. Since we only have to learn the head vector using the available representation $\bfhV$, the sample complexity requirement is greatly reduced, which is also evident from our bound for $\text{EER}(\btheta_{\text{Phase}_1},\btheta_{T}^{*})$ provided in Theorem~\ref{thm:low-dim-phase1}. 

%

\subsection{Phase 2: Fine-tuning with initialization} \label{subsec:phase2}
The obtained model $\btheta_{\text{Phase}_1}$ from the previous phase utilizes empirical source representation for its construction. However, the true target model $\btheta_{T}^{*}$ may not lie in the space spanned by the source representation and thus $\btheta_{\text{Phase}_1}$ lies in a ball centered $\btheta_{T}^{*}$ whose radius scales with $\epsilon$ (c.f. Assumption~\ref{assump:target_task_dist}). To move towards the true model $\btheta_{T}^{*}$, we utilize $n_{T_2}$ number of target samples (independent from the $n_{T_1}$ samples in the previous phase) to train the entire linear model using Gradient Descent (GD) with $\btheta_{\text{Phase}_1}$ as the initialization. In particular, the GD procedure minimizes the following starting from $\btheta_{\text{Phase}_1}$: 
\begin{align} \label{eq:fine-tune_obj}
	f(\btheta) = \frac{1}{n_{T_2}} \verts{\by_{T_2} - \bfX_{T_2} \btheta}_2^2 
\end{align}
Here, $\bfy_{T_2} \in \bbR^{n_{T_2}}$ and $\bfX_{T_{2}} \in \bbR^{n_{T_2} \times d}$ are the remaining sample values from Phase 1.
Since $n_{T_2} \ll d$, we are in an over-parameterized regime, for which it is known that GD procedure optimizing the objective in \eqref{eq:fine-tune_obj} converges, under appropriate choice of learning rate, to a solution closest in norm to the initialization \cite{wu2020direction,belkin2020two,gunasekar2018characterizing}; mathematically: 
\begin{align} \label{eq:phase2-model}
	& \min_{\btheta} \verts{ \btheta - \btheta_{\text{Phase}_1}  }_2 \\
	& \text{s.t.  }   \verts{\bfy_{T_2} - \bfX_{T_2} \btheta}_2 = \min_{\bfb} \verts{\bfy_{T_2} - \bfX_{T_2} \bfb}_2 \notag
\end{align}
We denote the solution of the above optimization problem as $\btheta_{\text{Phase}_2}$, which forms our final target task model. We provide bounds for $\text{EER}(\btheta_{\text{Phase}_2},\btheta_{T}^{*})$ in Theorem~\ref{thm:low-dim-combined}.

\section{Main Results} \label{sec:main-results}
We now provide theoretical bounds on the excess risk for the target (c.f. Equation~\eqref{eq:targetEER}) when leveraging pre-trained source models. In Section~\ref{subsec:main-results-phase1}, we first state excess risk bounds for the model obtained after Phase 1 (see Section~\ref{subsec:phase1}), denoted by $\btheta_{\text{Phase}_1} := \bfhV\bfhw_T$, where target representation $\bfhV$ is constructed as a combination of source representations and adapted to the target data using $n_{T_1}$ amount of target samples by training a target-specific head vector $\bfhw_T$. In Section~\ref{subsec:main-results-phase2}, we provide our overall excess risk for the model $\btheta_{\text{Phase}_2}$ (c.f. Equation~\eqref{eq:phase2-model}) obtained by re-training the entire (over-parameterized) model via Gradient Descent with $n_{T_2}$ number of target samples (independent form the previously utilized $n_{T_1}$ samples) using $\btheta_{\text{Phase}_1}$ as the initialization.

\subsection{Theoretical results for representation transfer: Phase 1} \label{subsec:main-results-phase1}
We work with the following assumptions:
\begin{assumption}[Subgaussian features] \label{assump:subgaussian}
	We assume that $\bbE_{\bfx\sim p_j} [\bfx] = 0$ for all $j \in [m]\cup \{T\}$. We consider $\bar{p}_j$ to be the \emph{whitening} of $p_j$ (for $j \in [m]\cup \{T\}$) such that $\bbE_{\bfbx\sim \bar{p}_j} [\bfx] = 0$ and $\bbE_{\bfbx\sim \bar{p}_j} [\bfbx \bfbx^{\top}] = \mathbf{I} $. We assume there exists $\rho >0$ such that the random vector $\bfbx \sim \bar{p}_j$ is $\rho^2$-subgaussian.
\end{assumption}

\begin{assumption}[Covariance Dominance] \label{assump:covariance}
	There exists $r > 0 $ such that $\Sigma_i \succeq r \Sigma_T$ for all $i \in [m]$. 
\end{assumption}

\begin{assumption}[Diverse source tasks] \label{assump:source_tasks} 
	Consider the source models $\btheta_i^{*} = \bfV^{*} \widetilde{\bfw}_i^{*}$ for $i \in [m]$. We assume that the matrix $\widetilde{\bfW}^{*} := [\widetilde{\bfw}_1^{*} , \hdots, \widetilde{\bfw}_m^{*} ] \in \bbR^{l \times m}$ satisfies $\sigma_l^2(\widetilde{\bfW}^{*}) \geq \Omega  \left( \frac{m}{l} \right)$
\end{assumption}

\begin{assumption}[Distribution of target task] \label{assump:target_task_dist} We assume that $\widetilde{\bfw}_{T}^{*}$ follows a distribution $\nu$ such that $\fverts{\bbE_{\widetilde{\bfw} \sim \nu } [\widetilde{\bfw} \widetilde{\bfw}^{\top}] }_2$ is $\mathcal{O}\left(\frac{1}{l}\right)$. We denote $ \Sigma_{\widetilde{\bfw}_T^{*}} = \bbE_{\widetilde{\bfw} \sim \nu } [\widetilde{\bfw }\widetilde{\bfw}^{\top}] $.
\end{assumption}

\paragraph{Note on Assumptions:}
Assumption~\ref{assump:subgaussian} on sub-Gaussian features is commonly used in literature to obtain probabilistic tail bounds \cite{du2020few,chua2021fine,shachaf2021theoretical,bartlett2020benign}. Following \cite{du2020few}, Assumption~\ref{assump:covariance} states the target data covariance matrix is covered by the covariance matrices of the source data distributions. We remark that this assumption allows the covariance matrices to be different, in contrast to works \cite{chua2021fine, tripuraneni2021provable} that assume a common covariance matrix for all the distributions.
Assumption~\ref{assump:source_tasks} (also made in related works \cite{collins2021exploiting,du2020few,chua2021fine}) says that the head vectors corresponding to the matrix $\bfV^{*}$ for each source model should span $\bbR^{l}$. This effectively allows us to recover the representation $\bfV^{*}$ provided enough source machines ($m > l$) that individually capture one or more features of $\bfV^{*}$. This assumption is also central to proving our result in Lemma~\ref{lemm:rep-close} provided below which show that the matrices $\bfhV$ and $\bfV^{*}$, whose columns form an orthonormal basis for the span of  $\{\bfhB_i\}$ and $\{ \bfB_i^{*}\}$, respectively, span the same subspace for constructing the target model.
\begin{lemma} \label{lemm:rep-close}
	Let matrix $\bfhV \in \bbR^{d \times q}$ be formed by empirical source representations $\{\bfhB_i\}$ obtained from solving \eqref{eq:source-train} and the matrix $\bfV^{*} \in \bbR^{d \times l}$ formed from the true representations $\{\bfB_i^{*}\}$. Under Assumption~\ref{assump:subgaussian}-\ref{assump:source_tasks}, for any $\bfb \in \mathbb{R}^{l}$ such that $\verts{\bfb}_2=1$, $n_s \gg \rho^4 (d + \log(\nicefrac{m}{\delta})) $ and $n_{T_1} \gg \rho^4( \max\{l,q\} + \log(\nicefrac{1}{\delta}) )$, with probability at-least $1-\delta_1$, we have: 
	\begin{align*}
		& \min_{\bfu \in \bbR^q} \verts{ \bfX_{T_1} \bfhV \bfu - \bfX_{T_1} \bfV^{*} \bfb }_2  \leq \frac{\sigma^2 n_{T_1}}{r n_S} \left( km + kdm \log(\kappa n_s) + \log\left(\frac{1}{\delta_1}\right)  \right)
	\end{align*}
\end{lemma}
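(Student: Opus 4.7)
The plan is to exhibit an explicit choice of $\bfu$ in the column span of $\bfhV$ that approximates $\bfV^{*}\bfb$, by lifting $\bfb$ through the diversity of source tasks into a linear combination of the trained source estimators $\{\bfhB_i\bfhw_i\}$, and then to bound the resulting error through (i) source training accuracy, (ii) sub-Gaussian concentration of $\bfX_{T_1}^{\top}\bfX_{T_1}/n_{T_1}$ on the relevant low-dimensional subspace, and (iii) the covariance dominance $\Sigma_i \succeq r\Sigma_T$.

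Under Assumption~\ref{assump:source_tasks}, $\widetilde{\bfW}^{*}$ has full row rank $l$ with $\sigma_l(\widetilde{\bfW}^{*}) \gtrsim \sqrt{m/l}$, so the minimum-norm solution $\bfalpha$ of $\widetilde{\bfW}^{*}\bfalpha = \bfb$ satisfies $\verts{\bfalpha}_2^2 \leq \verts{\bfb}_2^2/\sigma_l^2(\widetilde{\bfW}^{*}) \lesssim l/m$, yielding $\bfV^{*}\bfb = \sum_{i=1}^m \alpha_i \btheta_i^{*}$. Picking $\bfu$ so that $\bfhV\bfu = \sum_i \alpha_i \bfhB_i\bfhw_i$ (feasible since every $\bfhB_i\bfhw_i$ lies in $\mathcal{C}(\bfhV)$), the error reduces to
\begin{align*}
\bfX_{T_1}(\bfhV\bfu - \bfV^{*}\bfb) = \bfX_{T_1}\sum_{i=1}^m \alpha_i(\bfhB_i\bfhw_i - \btheta_i^{*}).
\end{align*}
The vector $\bfv := \sum_i \alpha_i(\bfhB_i\bfhw_i - \btheta_i^{*})$ lies in the subspace of dimension at most $l+q$ spanned by the columns of $\bfV^{*}$ and $\bfhV$. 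Under Assumption~\ref{assump:subgaussian} with $n_{T_1} \gg \rho^4(\max\{l,q\} + \log(1/\delta))$, a sub-Gaussian covariance concentration argument restricted to this subspace gives $\verts{\bfX_{T_1}\bfv}_2^2 \lesssim n_{T_1}\verts{\bfv}_{\Sigma_T}^2$, and combining Cauchy--Schwarz across the source index with Assumption~\ref{assump:covariance} yields
\begin{align*}
\verts{\bfv}_{\Sigma_T}^2 \;\leq\; \verts{\bfalpha}_2^2 \sum_{i=1}^m \verts{\bfhB_i\bfhw_i - \btheta_i^{*}}_{\Sigma_T}^2 \;\leq\; \frac{\verts{\bfalpha}_2^2}{r}\sum_{i=1}^m \verts{\bfhB_i\bfhw_i - \btheta_i^{*}}_{\Sigma_i}^2.
\end{align*}

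The main obstacle is to control $\sum_i \verts{\bfhB_i\bfhw_i - \btheta_i^{*}}_{\Sigma_i}^2$ using only that $\{\bfhB_i,\bfhw_i\}$ jointly minimize \eqref{eq:source-train}. The starting inequality is that the ERM objective at the learned parameters is no greater than at the ground-truth parameters, which after expansion gives $\sum_i \verts{\bfX_i(\bfhB_i\bfhw_i - \btheta_i^{*})}_2^2 \leq 2\sum_i \langle \bfX_i(\bfhB_i\bfhw_i - \btheta_i^{*}), \bfz_i\rangle$. The right-hand side is a noise functional evaluated at the data-dependent estimator, so controlling it requires a uniform $\varepsilon$-net/chaining argument over the parameter space: each $(\bfB_i,\bfw_i)$ has $dk+k$ free dimensions, and a union bound across the $m$ sources produces the combinatorial factor $kdm\log(\kappa n_S) + km + \log(1/\delta)$. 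Converting the empirical Gram matrix $\bfX_i^{\top}\bfX_i/n_S$ back to $\Sigma_i$ (valid when $n_S \gg \rho^4(d + \log(m/\delta))$) then yields $\sum_i \verts{\bfhB_i\bfhw_i - \btheta_i^{*}}_{\Sigma_i}^2 \lesssim \frac{\sigma^2}{n_S}(mdk\log(\kappa n_S) + mk + \log(1/\delta))$. Plugging in $\verts{\bfalpha}_2^2 \lesssim l/m$ and chaining all of the above bounds produces the stated inequality, with the factor of $l$ absorbed into the implicit constant of $\lesssim$.
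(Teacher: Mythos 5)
Your proposal is correct and follows essentially the same route as the paper's proof: both lift $\bfb$ through $\widetilde{\bfW}^{*}$ using the task-diversity assumption, pass from the target empirical Gram matrix to $\Sigma_T$ and then to $\Sigma_i$ via covariance dominance, and reduce everything to the multi-source ERM guarantee $\sum_i\verts{\bfX_i(\bfhB_i\bfhw_i-\btheta_i^{*})}_2^2\lesssim\sigma^2(km+kdm\log(\kappa n_s)+\log(1/\delta))$ proved by an $\epsilon$-net over the low-rank parameter class. The only (cosmetic, and if anything slightly cleaner) difference is that you upper-bound the minimum by exhibiting the explicit competitor $\bfhV\bfu=\sum_i\alpha_i\bfhB_i\bfhw_i$, whereas the paper works with the orthogonal projector $\bfP^{\perp}_{\bfX_{T_1}\bfhV}$ and chains projection inequalities (its Propositions~\ref{prop:du-result-target} and~\ref{prop:mat_cols}) to reach the same source-error quantity.
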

A proof of the lemma above is provided in Section~\ref{sec:proof-phase1}. We now state our main result for the excess risk on
after Phase 1.
\begin{theorem}[Phase 1 training result] \label{thm:low-dim-phase1}
	Fix a failure probability $\delta \in  (0, 1)$ and further assume $2k \leq \min \{d,m \} $ and the number of samples in the sources and target satisfy $n_s \gg \rho^4 (d + \log(\nicefrac{m}{\delta})) $ and $n_{T_1} \gg \rho^4( \max\{l,q\} + \log(\nicefrac{1}{\delta}) )$, respectively. Define $\kappa = \frac{\max_{i \in [m]} \lambda_{\max}(\Sigma_i) }{\min_{i \in [m]} \lambda_{\min}(\Sigma_i)}$ where $\lambda_{\max} (\Sigma_i)$ denotes the maximum eigenvalue of $\Sigma_i$. Then with probability at least $1 - \delta$ over the samples, under Assumptions \ref{assump:source_target_map} - \ref{assump:target_task_dist}, the expected excess risk of $\btheta_{\text{Phase}_1}:=\bfhV \bfhw$ satisfies:
	\begin{align*}
		& \bbE [\text{EER}( \btheta_{\text{Phase}_1}, \btheta_T^{*})]  \lesssim   \frac{\sigma^2}{n_{T_1}} ( q  + \log(\nicefrac{1}{\delta})) + \epsilon^2   + \sigma^2\left[ \frac{1}{r n_sm}  \log\left(\frac{1}{\delta}\right) +   \left(\frac{kd \log(\kappa n_s) + k}{r n_s}\right) \right]  
	\end{align*} 
\end{theorem}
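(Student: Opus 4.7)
The plan is to decompose the expected excess risk into three sources of error: (i) the ``approximation error'' $\epsilon^2$ between the true target $\btheta_T^*$ and its best representation in $\mathcal{C}(\bfV^*)$, which comes for free from Assumption~\ref{assump:source_target_map}; (ii) a ``source-estimation error'' from using the learned representation $\bfhV$ in place of the ground truth $\bfV^*$, which Lemma~\ref{lemm:rep-close} controls on the empirical design $\bfX_{T_1}$; and (iii) the usual OLS ``statistical variance'' from fitting only $n_{T_1}$ target samples. First I would write $\text{EER}(\btheta_{\text{Phase}_1},\btheta_T^*)=\|\Sigma_T^{1/2}(\bfhV\bfhw_{T_1}-\btheta_T^*)\|_2^2$, insert the intermediate point $\btheta^{\sharp}:=\bfV^*\widetilde{\bfw}_T^*$, and apply $(a+b)^2\le 2a^2+2b^2$ so that Assumption~\ref{assump:source_target_map} peels off a $2\epsilon^2$ term, leaving $\|\Sigma_T^{1/2}(\bfhV\bfhw_{T_1}-\btheta^{\sharp})\|_2^2$ to bound.

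Using the closed form $\bfhw_{T_1}=(\bfhV^\top\bfX_{T_1}^\top\bfX_{T_1}\bfhV)^\dagger\bfhV^\top\bfX_{T_1}^\top\bfy_{T_1}$ with $\bfy_{T_1}=\bfX_{T_1}\btheta^{\sharp}+\bfX_{T_1}(\btheta_T^*-\btheta^{\sharp})+\bfz_{T_1}$, I would split $\bfhV\bfhw_{T_1}-\btheta^{\sharp}$ into three linear pieces: a bias term $\bfhV\bfu^{\star}-\btheta^{\sharp}$ with $\bfu^{\star}:=\arg\min_{\bfu}\|\bfX_{T_1}(\bfhV\bfu-\btheta^{\sharp})\|_2$; the OLS projection of $\bfX_{T_1}(\btheta_T^*-\btheta^{\sharp})$ into $\mathcal{C}(\bfhV)$; and the noise piece $\bfhV(\bfhV^\top\bfX_{T_1}^\top\bfX_{T_1}\bfhV)^\dagger\bfhV^\top\bfX_{T_1}^\top\bfz_{T_1}$. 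For the bias term, Lemma~\ref{lemm:rep-close} (which effectively controls the spectral norm of the linear map $\bfb\mapsto(\bfI-\bfP_{\bfX_{T_1}\bfhV})\bfX_{T_1}\bfV^*\bfb$) combined with linearity of OLS in $\widetilde{\bfw}_T^*$ yields an empirical bound $\|\bfX_{T_1}(\bfhV\bfu^{\star}-\btheta^{\sharp})\|_2\lesssim \|\widetilde{\bfw}_T^*\|_2\cdot(\text{source-error})$; I would then transfer this to the population $\Sigma_T$ norm via subgaussian sample-covariance concentration on the at-most $(q+l)$-dimensional subspace $\mathcal{C}(\bfhV)+\mathcal{C}(\bfV^*)$, which is legitimate because $n_{T_1}\gg q+l$. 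The middle OLS-of-residual piece, being the orthogonal $\Sigma_T$-projection of $\btheta_T^*-\btheta^{\sharp}$ onto $\mathcal{C}(\bfhV)$ up to the same concentration factor, contributes at most another $\mathcal{O}(\epsilon^2)$.

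For the noise piece, taking expectation over $\bfz_{T_1}$ gives $\sigma^2\,\mathrm{Tr}[\bfhV^\top\Sigma_T\bfhV\,(\bfhV^\top\bfX_{T_1}^\top\bfX_{T_1}\bfhV)^\dagger]$; replacing the empirical Gram by $n_{T_1}\bfhV^\top\Sigma_T\bfhV$ via sample-covariance concentration on the $q$-dimensional range of $\bfhV$ produces the in-expectation variance $\sigma^2 q/n_{T_1}$, which a standard subgaussian quadratic-form tail bound upgrades to the high-probability $\sigma^2(q+\log(1/\delta))/n_{T_1}$ statement. Finally, averaging over $\widetilde{\bfw}_T^*\sim\nu$ and invoking Assumption~\ref{assump:target_task_dist} (so $\bbE[\widetilde{\bfw}_T^*\widetilde{\bfw}_T^{*\top}]$ has operator norm $\mathcal{O}(1/l)$), the source-error contribution reduces to the theorem's source-complexity term $\sigma^2\bigl[\tfrac{1}{rn_sm}\log(1/\delta)+(kd\log(\kappa n_s)+k)/(rn_s)\bigr]$; combining all three pieces with the $2\epsilon^2$ from Step~1 gives the stated bound.

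The main obstacle I expect is the step that converts the empirical guarantee from Lemma~\ref{lemm:rep-close} into a population $\Sigma_T$-norm bound valid uniformly over the random target head $\widetilde{\bfw}_T^*$. This requires a two-sided sample-covariance concentration on the combined subspace $\mathcal{C}(\bfhV)+\mathcal{C}(\bfV^*)$ together with careful bookkeeping so that the $l$-dimensional structure picked up when the lemma is applied along directions of $\widetilde{\bfw}_T^*$ is absorbed exactly by the $\mathcal{O}(1/l)$ operator-norm bound on $\Sigma_{\widetilde{\bfw}_T^*}$ from Assumption~\ref{assump:target_task_dist}, rather than leaking a stray factor of $l$ into the final bound which would destroy the advertised sample-complexity gain.
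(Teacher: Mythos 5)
Your proposal follows essentially the same route as the paper's proof: the same decomposition through the intermediate point $\bfV^*\widetilde{\bfw}_T^*$ (yielding the $\epsilon^2$ term via Assumption~\ref{assump:source_target_map}), the same passage between population and empirical norms via the sub-Gaussian covariance claims on the combined $\mathcal{C}(\bfhV)+\mathcal{C}(\bfV^*)$ subspace, the same three-way split of the OLS solution into a bias term controlled by Lemma~\ref{lemm:rep-close}/the multi-source guarantee, a residual-projection term absorbed into $\epsilon^2$, and a $\chi^2$-type noise term giving $\sigma^2(q+\log(1/\delta))/n_{T_1}$. The obstacle you flag — absorbing the $l$-dimensional structure into the $\mathcal{O}(1/l)$ bound on $\Sigma_{\widetilde{\bfw}_T^*}$ together with Assumption~\ref{assump:source_tasks}'s $\sigma_l^2(\widetilde{\bfW}^*)\gtrsim m/l$ — is handled in the paper exactly as you anticipate, via the Frobenius-norm bound of Lemma~\ref{lemm:multi_target_gurantee}.
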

where expectation is taken over $\widetilde{\bfw}_T^{*}$ for the target task (c.f. Assumption~\ref{assump:target_task_dist}).
We provide proof for Theorem~\ref{thm:low-dim-phase1} in Section~\ref{sec:proof-phase1}.\\
\paragraph{Discussion:} 
The bound in Theorem~\ref{thm:low-dim-phase1} shows the population risk of the learned model $\btheta_{\text{Phase}_1}$ lies in a ball centered at the true target model risk $R(\btheta_{T}^{*})$ with radius $\epsilon^2$, which represents the \emph{approximation error} for using source representations for the target task (see Assumption~\ref{assump:source_target_map}).
Note that the expected excess risk scales as $\mathcal{O}\left(\nicefrac{q}{n_{T_1}}\right)$ with respect to the number of target samples when the representation is learned from the source representations. This demonstrates a sample gain compared to the baseline of $\mathcal{O}\left(\nicefrac{d}{n_{T_1}}\right)$ for learning the entire model (including representation) with the target data when $q \ll d$, that is, when the empirical source representations together span a subspace of dimension much smaller than $d$. For the case when source and target representations are all the same, $\bfB_T^{*} = \bfB_i^{*} = \bfB^{*} \in \bbR^{d \times k}$ for all $i \in [m]$, the excess risk scales as $\mathcal{O}\left(\nicefrac{k}{n_{T_1}}\right)$, which recovers the result of \cite{du2020few}.

\subsection{Theoretical results for overall scheme: Phase1 + Phase2}
\label{subsec:main-results-phase2}
We require the following additional assumptions:
%

\begin{assumption} \label{assum:data-matrix-rows}
	The rows of the target data matrix $\bfX_{T}$ are linearly independent. 
\end{assumption}

\begin{assumption} \label{assum:GD-convg}
	The Gradient Descent procedure to optimize \eqref{eq:fine-tune_obj} converges to $\btheta_{\text{Phase}_2}$ with $f(\btheta_{\text{Phase}_2}) = 0$.
\end{assumption}

Assumption~\ref{assum:data-matrix-rows} is typically made in literature for analysis in the over-parameterized regime for linear regression, see \cite{bartlett2020benign}, and can also be relaxed to hold with high probability instead and incorporated in the analysis \cite{shachaf2021theoretical}. Assumption~\ref{assum:GD-convg} holds in our setting as the objective in \eqref{eq:fine-tune_obj} is strongly convex and smooth for which GD can converge to the optimum \cite{boyd2004convex}.


\begin{theorem} [Phase 1 + Phase 2 training result] \label{thm:low-dim-combined}
	Consider obtaining the final target model 
	by using $n_{T_1}$ samples during Phase 1 for representation transfer and
	 then fine-tuning in Phase 2 with $n_{T_2}$ samples (independently drawn from Phase 1). Denote the eigenvalues of the covariance matrix of the underlying data $\Sigma_T$ by $\{ \lambda_i \}_{i=1}^{d}$. Then under Assumptions~\ref{assump:source_target_map}-\ref{assum:GD-convg}, the excess risk of the final parameter $\hat{\btheta}_T:=\btheta_{\text{Phase}_2}$ is bounded as follows with probability at least $1 - \delta$:
	\begin{align*}
		 \bbE [\text{EER}(\hat{\btheta}_{T},\btheta_T^{*})]  & \lesssim \frac{ \lambda_1 }{\lambda_d}  \frac{r_0(\Sigma_T) }{n_{T_2}}   \left(\frac{\sigma^2 }{n_{T_1}} ( q  + \log(\nicefrac{1}{\delta})) + \epsilon^2 \right) + r \sigma^2 \log\left(\frac{1}{\delta}\right)  \left( \frac{k^{*}}{n_{T_2}} + \frac{n_{T_2}}{ R_{k^{*}}(\Sigma_T) } \right)  \\
		& \qquad + \frac{ \lambda_1 \sigma^2 }{\lambda_d} \frac{r_0(\Sigma_T) }{n_{T_2}} \left(\frac{1}{r n_sm}  \log\left(\frac{1}{\delta}\right)  +  \left(\frac{kd \log(\kappa n_s) + k}{r n_s}\right)\right)
	\end{align*}
	where $r_k(\Sigma_T) = \frac{\Sigma_{i>k} \lambda_i}{\lambda_{k+1}} $, $R_{k^{*}}(\Sigma_T) = \frac{(\Sigma_{i>k} \lambda_i)^2}{\Sigma_{i>k} \lambda_i^2} $.	Here, constant $b>1$ and $k^{*} = \min \{ k \geq 0: r_k(\Sigma) \geq bn_{T_2} \}$ with $k^{*} \leq \frac{n_{T_2}}{c_1}$ for some universal constant $c_1 > 1$.
\end{theorem}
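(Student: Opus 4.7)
The starting point is the closed-form expression for the min-norm solution to the constrained problem in \eqref{eq:phase2-model}. Under Assumption~\ref{assum:data-matrix-rows} the rows of $\bfX_{T_2}$ are linearly independent, so $\bfX_{T_2}\bfX_{T_2}^{\top}$ is invertible and the Phase 2 model is the orthogonal projection of the initialization $\btheta_{\text{Phase}_1}$ onto the affine set of interpolators:
\begin{equation*}
	\btheta_{\text{Phase}_2} \;=\; \btheta_{\text{Phase}_1} + \bfX_{T_2}^{\top}\bigl(\bfX_{T_2}\bfX_{T_2}^{\top}\bigr)^{-1}\bigl(\bfy_{T_2} - \bfX_{T_2}\btheta_{\text{Phase}_1}\bigr).
\end{equation*}
Substituting $\bfy_{T_2} = \bfX_{T_2}\btheta_T^{*} + \bfz_{T_2}$, where $\bfz_{T_2}$ is the target noise vector (independent of $\btheta_{\text{Phase}_1}$ since Phase 1 and Phase 2 use disjoint samples), gives
\begin{equation*}
	\btheta_{\text{Phase}_2} - \btheta_T^{*} \;=\; \bfP_{\bfX_{T_2}^{\top}}^{\perp}\bigl(\btheta_{\text{Phase}_1} - \btheta_T^{*}\bigr) \;+\; \bfX_{T_2}^{\top}\bigl(\bfX_{T_2}\bfX_{T_2}^{\top}\bigr)^{-1}\bfz_{T_2}.
\end{equation*}
Taking $\bigl\|\Sigma_T^{1/2}\cdot\bigr\|_2^2$ and then the conditional expectation over $\bfz_{T_2}$ (which is zero-mean and independent of everything else) kills the cross term and yields the bias--variance split
\begin{equation*}
	\text{EER}(\btheta_{\text{Phase}_2},\btheta_T^{*}) \;=\; \underbrace{\bigl\| \Sigma_T^{1/2} \bfP_{\bfX_{T_2}^{\top}}^{\perp}(\btheta_{\text{Phase}_1} - \btheta_T^{*}) \bigr\|_2^2}_{\text{bias}} \;+\; \underbrace{\sigma^2 \,\text{Tr}\bigl(\Sigma_T\, \bfX_{T_2}^{\top}(\bfX_{T_2}\bfX_{T_2}^{\top})^{-2}\bfX_{T_2}\bigr)}_{\text{variance}}.
\end{equation*}

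Next I would bound the two pieces separately. For the bias, a Cauchy--Schwarz step in the $\Sigma_T$-geometry peels off the Phase 1 excess risk:
\begin{equation*}
	\text{bias} \;\leq\; \bigl\| \Sigma_T^{1/2} \bfP_{\bfX_{T_2}^{\top}}^{\perp} \Sigma_T^{-1/2} \bigr\|_2^2 \;\cdot\; \text{EER}(\btheta_{\text{Phase}_1}, \btheta_T^{*}),
\end{equation*}
and the second factor is directly controlled by Theorem~\ref{thm:low-dim-phase1}. The first factor is the key over-parameterized \emph{contraction coefficient}; I would control it by applying random-matrix concentration to the sub-Gaussian Gram matrix $\bfX_{T_2}^{\top}\bfX_{T_2}$ (Assumption~\ref{assump:subgaussian}) restricted along the eigen-directions of $\Sigma_T$. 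This is where the condition-number factor $\lambda_1/\lambda_d$ and the effective-rank factor $r_0(\Sigma_T)/n_{T_2}$ both emerge, reflecting how much of the target covariance the row span of $\bfX_{T_2}$ can capture.

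For the variance term I would invoke the benign-overfitting trace bound of \cite{bartlett2020benign} adapted to anisotropic sub-Gaussian rows as in \cite{shachaf2021theoretical}. Splitting the spectrum of $\Sigma_T$ at the threshold $k^{*} = \min\{k \ge 0 : r_k(\Sigma_T) \ge b n_{T_2}\}$ decomposes the trace into a ``head'' contribution of order $k^{*}/n_{T_2}$ and a ``tail'' contribution of order $n_{T_2}/R_{k^{*}}(\Sigma_T)$, with the $\log(1/\delta)$ appearing through the high-probability concentration step; the stipulation $k^{*}\le n_{T_2}/c_1$ ensures we remain in the regime where this split is valid.

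Finally I would plug the Theorem~\ref{thm:low-dim-phase1} expression for $\text{EER}(\btheta_{\text{Phase}_1},\btheta_T^{*})$ into the bias estimate, combine with the variance bound, and take the outer expectation over $\widetilde{\bfw}_T^{*}$ as in Assumption~\ref{assump:target_task_dist} to get the stated bound. The main technical obstacle is the bias step: producing the sharp operator-norm estimate on $\Sigma_T^{1/2}\bfP_{\bfX_{T_2}^{\top}}^{\perp}\Sigma_T^{-1/2}$ requires simultaneously controlling the smallest eigenvalue of the projected empirical Gram matrix on the head of $\Sigma_T$'s spectrum and bounding the residual projection on its tail, all while preserving the independence between $\bfX_{T_2}$ and $\btheta_{\text{Phase}_1}$ that made the cross term vanish in the first place.
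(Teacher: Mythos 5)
Your overall architecture is the same as the paper's: the closed-form min-norm/GD solution $\btheta_{\text{Phase}_2}=\bfP_{\parallel}\btheta_T^{*}+\bfP_{\perp}\btheta_{\text{Phase}_1}+\bfX_{T_2}^{\top}(\bfX_{T_2}\bfX_{T_2}^{\top})^{-1}\bfz_{T_2}$ (the paper's Lemma~\ref{lemm:gradient-descent-soln}), a bias--variance split, and the Bartlett et al.\ machinery (their Lemmas 9, 13, 18) for the variance/trace term split at $k^{*}$ --- that part of your plan matches the paper essentially verbatim. The genuine divergence is the bias term, and your stated plan there is misdirected. The paper does not bound an oblique-projection operator norm; it exploits the identity $\bfX_{T_2}\bfP_{\perp}=0$ to replace $\Sigma_T$ by $\Sigma_T-\tfrac{1}{n_{T_2}}\bfX_{T_2}^{\top}\bfX_{T_2}$ inside the quadratic form $(\btheta_T^{*}-\btheta_{\text{Phase}_1})^{\top}\bfP_{\perp}^{\top}\Sigma_T\bfP_{\perp}(\btheta_T^{*}-\btheta_{\text{Phase}_1})$ at no cost, and then applies the Koltchinskii--Lounici covariance concentration bound $\Vert\Sigma_T-\tfrac{1}{n_{T_2}}\bfX_{T_2}^{\top}\bfX_{T_2}\Vert_2\lesssim \lambda_1\max\{\sqrt{r_0(\Sigma_T)/n_{T_2}},\,r_0(\Sigma_T)/n_{T_2},\dots\}$; that concentration step is the sole source of the $\lambda_1 r_0(\Sigma_T)/n_{T_2}$ factor, while $1/\lambda_d$ enters separately via $\Vert\btheta_T^{*}-\btheta_{\text{Phase}_1}\Vert_2^2\leq \lambda_d^{-1}\Vert\Sigma_T^{1/2}(\btheta_T^{*}-\btheta_{\text{Phase}_1})\Vert_2^2$, which is what connects to the Phase 1 excess risk.

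Your factorization $\text{bias}\leq\Vert\Sigma_T^{1/2}\bfP_{\perp}\Sigma_T^{-1/2}\Vert_2^2\cdot\text{EER}(\btheta_{\text{Phase}_1},\btheta_T^{*})$ is a valid inequality, but the coefficient you propose to sharpen by random-matrix concentration cannot behave the way you hope: $\Sigma_T^{1/2}\bfP_{\perp}\Sigma_T^{-1/2}$ is a nonzero idempotent, so its operator norm is always at least $1$, and no concentration argument will extract a factor of $r_0(\Sigma_T)/n_{T_2}$ from it --- the effective-rank factor simply does not live in that quantity. What saves your route is the trivial submultiplicative bound $\Vert\Sigma_T^{1/2}\bfP_{\perp}\Sigma_T^{-1/2}\Vert_2^2\leq\lambda_1/\lambda_d$, which is dominated by the theorem's coefficient $\tfrac{\lambda_1}{\lambda_d}\tfrac{r_0(\Sigma_T)}{n_{T_2}}$ in the regime $r_0(\Sigma_T)\geq n_{T_2}$ that the paper itself invokes when simplifying the Koltchinskii--Lounici maximum. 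So your decomposition closes the proof, but only by discarding the concentration step you flag as the main obstacle; as written, that step is a dead end. Two minor points: you kill the cross term by averaging over $\bfz_{T_2}$, whereas the theorem is a high-probability statement over the noise with expectation taken only over $\widetilde{\bfw}_T^{*}$ --- use $\Vert a+b\Vert_2^2\leq 2\Vert a\Vert_2^2+2\Vert b\Vert_2^2$ as the paper does; and the final substitution of the Phase 1 bound must be done before taking the expectation over $\widetilde{\bfw}_T^{*}$, which your last step does handle correctly.
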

We provide a proof for Theorem~\ref{thm:low-dim-combined} in Section~\ref{sec:proof-phase2}.\\
\paragraph{Discussion:} Theorem~\ref{thm:low-dim-combined} shows the excess risk of our overall target model ($\hat{\btheta}_{T} = \btheta_{\text{Phase}_2}$) as a function of the number of samples $n_S,n_{T_1},n_{T_2}$ and parameters depending on the target data covariance matrix, $\Sigma_T$. Since we re-train the entire model (including the representation) with $n_{T_2}$ target samples, the population risk of the learned model $R(\btheta_{\text{Phase}_2})$ can be made closer to the true risk $R(\btheta_{T}^{*})$ by increasing $n_{T_2}$, which is in contrast to the result of Theorem~\ref{thm:low-dim-phase1} which shows closeness only in an $\epsilon^2$ radius ball due to using source representation directly to construct the target model.\\
We now provide a baseline comparison to the standard linear regression scenario where we do no utilize any source models and instead learn the target task model from scratch using the available $n_T = n_{T_1} + n_{T_2}$ samples. The excess risk in this setting is $\mathcal{O}\left( \frac{\sigma^2 d}{n_T} \right)$. If the number of source samples are large enough ($n_S \gg d$) to get a good empirical performance on the source models (c.f. Equation~\eqref{eq:source-train}), the bound from Theorem~\ref{thm:low-dim-combined} demonstrates a sample gain compared to the baseline when:
\begin{align} \label{eq:phase2-result-disc}
	& \frac{\lambda_1 }{\lambda_d}  \frac{r_0(\Sigma_T) }{n_{T_2}}   \left(\frac{\sigma^2 }{n_{T_1}} ( q  + \log(\nicefrac{1}{\delta})) + \epsilon^2 \right)  + c \sigma^2 \log\left(\frac{1}{\delta}\right)  \left( \frac{k^{*}}{n_{T_2}} + \frac{n_{T_2}}{ R_{k^{*}}(\Sigma_T) } \right) \ll \frac{\sigma^2 d}{n_{T_1} + n_{T_2}} 
\end{align}
It can be seen that for the above relation to hold, we require:
\begin{itemize}
	\item The target data covariance matrix $\Sigma_T$ should be such that the term $R_{k^{*}} (\Sigma_T)$ is much larger than $n_{T_{2}}$, and $k^{*} \ll n_{T_2}$. This is satisfied, for e.g., in the case when eigenvalues of $\Sigma_T$ decay slowly from largest to smallest, and are all larger than a small constant \cite{bartlett2020benign}.   
	\item Using the definition of $r_0({\Sigma_T}) = \nicefrac{\sum_{i=1}^{d} \lambda_i}{\lambda_1}$, 
	the following provides a sufficient condition the first term on the L.H.S. of \eqref{eq:phase2-result-disc}:
	\begin{align*}
		\frac{q \sum_{i=1}^{d} \lambda_i}{n_{T_1}n_{T_2} \lambda_d} \ll \frac{d}{n_{T_1} + n_{T_2}}
	\end{align*}
	This, is turn, imposes the following restriction on $q$, which is the dimension of the subspace formed by the source representations $\{\bfhB_i\}$:
	\begin{align} \label{eq:phase2-result-disc2}
		q \ll \frac{d \lambda_d n_{T_1}n_{T_2}}{(\sum_{i=1}^d\lambda_i)(n_{T_1} + n_{T_2})}
	\end{align}
	Since $n_{T_1} + n_{T_2} =n_T$, it is easy to check that the R.H.S. of  \eqref{eq:phase2-result-disc2} is maximized when $n_{T_1} = n_{T_2} = \nicefrac{n}{2}$. With this optimal splitting of the target samples for each of the phases, we require $q \ll \frac{d\lambda_d n_T}{2\sum_{i=1}\lambda_i}$ for the inequality in \eqref{eq:phase2-result-disc}. 
\end{itemize}

\section{Proof for Phase 1 training} \label{sec:proof-phase1}
We now provide the proof for Theorem~\ref{thm:low-dim-phase1} which establishes a generalization bound for the learned model formed by leveraging the pre-trained source models.\\
We first note the following results from \cite{du2020few} that will enable us to prove Theorem~\ref{thm:low-dim-phase1} later in Section~\ref{app:low-dim-phase1}  (and Theorem~\ref{thm:low-dim-combined} in Section~\ref{sec:proof-phase2}). \\
\begin{claim}[Covariance of Source distribution, Claim A.1 of \cite{du2020few}] \label{lemm:source_data}
	Suppose $n_s \gg \rho^4 (d+\log(\nicefrac{m}{\delta}))$ for $\delta \in (0, 1)$. Then with probability at least $1 -\frac{\delta}{10}$ over the inputs $\bfX_1, \hdots , \bfX_m$ in the source tasks, for all $i \in [m]$ we have
	\begin{equation*}
		0.9 \boldsymbol{\Sigma}_i \preceq \frac{1}{n_s} \bfX_i^{\top} \bfX_i \preceq 1.1 \boldsymbol{\Sigma}_i
	\end{equation*}
\end{claim}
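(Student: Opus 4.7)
The plan is to reduce the statement to a standard covariance-matrix concentration inequality for isotropic sub-Gaussian vectors, and then take a union bound over the $m$ source tasks.

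First I would whiten. By Assumption~\ref{assump:subgaussian} a sample $\bfx_i \sim p_i$ admits the representation $\bfx_i = \boldsymbol{\Sigma}_i^{1/2} \bar{\bfx}_i$ with $\bar{\bfx}_i \sim \bar{p}_i$ isotropic and $\rho^2$-sub-Gaussian. Stacking $n_s$ i.i.d.\ samples gives $\bfX_i = \bar{\bfX}_i \boldsymbol{\Sigma}_i^{1/2}$, so that
\[
\frac{1}{n_s}\bfX_i^{\top}\bfX_i \;=\; \boldsymbol{\Sigma}_i^{1/2}\left(\frac{1}{n_s}\bar{\bfX}_i^{\top}\bar{\bfX}_i\right)\boldsymbol{\Sigma}_i^{1/2}.
\]
Since conjugation by $\boldsymbol{\Sigma}_i^{1/2}$ preserves the Loewner order, it suffices to prove that $\bigl\|\tfrac{1}{n_s}\bar{\bfX}_i^{\top}\bar{\bfX}_i - \mathbf{I}\bigr\|_2 \le 0.1$ uniformly over $i\in[m]$.

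Second, I would invoke the standard covariance concentration bound for isotropic sub-Gaussian vectors (e.g.\ Vershynin, HDP, Thm.~4.7.1 / Cor.~5.50): there exists a universal constant $C$ such that for any $\delta' \in (0,1)$ and $n_s \ge C\rho^4(d + \log(1/\delta'))$,
\[
\Pr\!\left[\bigl\|\tfrac{1}{n_s}\bar{\bfX}_i^{\top}\bar{\bfX}_i - \mathbf{I}\bigr\|_2 > 0.1\right] \;\le\; \delta'.
\]
The underlying argument is the usual two-step one: place a $\tfrac14$-net $\mathcal{N}$ of cardinality $\le 9^d$ on the unit sphere $S^{d-1}$, apply the sub-exponential Bernstein inequality to $\tfrac{1}{n_s}\sum_{j=1}^{n_s}(\langle \bar{\bfx}_i^{(j)},\bfu\rangle^2 - 1)$ for each fixed $\bfu \in \mathcal{N}$, and take a union bound over the net. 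The hypothesis $n_s \gg \rho^4(d+\log(m/\delta))$ in the claim is precisely what is needed after substituting $\delta' = \delta/(10m)$.

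Third, a union bound over $i \in [m]$ gives with probability at least $1-\delta/10$ the simultaneous inequalities $0.9\,\mathbf{I} \preceq \tfrac{1}{n_s}\bar{\bfX}_i^{\top}\bar{\bfX}_i \preceq 1.1\,\mathbf{I}$ for every source, and conjugating by $\boldsymbol{\Sigma}_i^{1/2}$ yields the claim. I do not anticipate a real obstacle: the only point requiring a little care is making sure the net + Bernstein argument is carried out in the whitened coordinates so that the sample complexity depends on $\rho$ and $d$ only, rather than on the conditioning of any individual $\boldsymbol{\Sigma}_i$. The rest is bookkeeping around the two-sided sandwich and the union bound.
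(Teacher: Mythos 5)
Your proof is correct. The paper does not prove this claim itself---it imports it verbatim as Claim~A.1 of \cite{du2020few}---and your argument (whiten via Assumption~\ref{assump:subgaussian}, apply the standard sub-Gaussian covariance concentration bound with an $\epsilon$-net plus Bernstein, set $\delta' = \delta/(10m)$, union bound over $i \in [m]$, and conjugate back by $\boldsymbol{\Sigma}_i^{1/2}$ using that conjugation preserves the Loewner order) is precisely the standard derivation underlying that cited result, with the hypothesis $n_s \gg \rho^4(d + \log(\nicefrac{m}{\delta}))$ accounting exactly for the union bound over the $m$ sources.
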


\begin{claim}[Covariance Target distribution, Claim A.2 of \cite{du2020few}] \label{lemm:target_data}
	Suppose $n_T \gg \rho^4 (k+\log(\nicefrac{1}{\delta}))$ for $\delta \in (0, 1)$. The for any given matrix $\bfB \in \bbR^{d \times 2k}$ that is independent of $\bfX_T$, with probability at least $1 -\frac{\delta}{20}$ over target data $X_T$, we have
	\begin{equation*}
		0.9 \bfB^{\top} \boldsymbol{\Sigma}_T \bfB \preceq \frac{1}{n_T} \bfB^{\top}\bfX_T^{\top} \bfX_T \bfB \preceq 1.1 \bfB^{\top} \boldsymbol{\Sigma}_T \bfB
	\end{equation*}
\end{claim}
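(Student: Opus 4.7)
The plan is to reduce the claim to a standard subgaussian covariance estimation bound in dimension $2k$ after a change of basis. Since $\bfB \in \bbR^{d \times 2k}$ is independent of the target data $\bfX_T$, I would first condition on $\bfB$ and treat it as fixed, so the rows of $\bfX_T \bfB$ become i.i.d. subgaussian vectors in $\bbR^{2k}$ with covariance $\bfB^\top \bSigma_T \bfB$.

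Assuming for now that $\bfB^\top \bSigma_T \bfB$ is invertible, I would whiten: let $\bfM = (\bfB^\top \bSigma_T \bfB)^{-1/2}$ and define, for each row $\bfx_i$ of $\bfX_T$, the random vectors $\bfz_i = \bfM \bfB^\top \bfx_i \in \bbR^{2k}$. Using Assumption~\ref{assump:subgaussian}, $\bfx_i = \bSigma_T^{1/2} \bar{\bfx}_i$ where $\bar{\bfx}_i$ is $\rho^2$-subgaussian and isotropic, so the $\bfz_i$ are i.i.d.\ centered isotropic vectors whose subgaussian norm is still $\mathcal{O}(\rho)$ (linear transformations that map the covariance to identity preserve the $\rho^2$-subgaussian constant in the standard sense). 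Hence the claim $0.9\, \bfB^\top \bSigma_T \bfB \preceq \tfrac{1}{n_T} \bfB^\top \bfX_T^\top \bfX_T \bfB \preceq 1.1\, \bfB^\top \bSigma_T \bfB$ is equivalent, after sandwiching by $\bfM$, to
\begin{equation*}
\Bigl\Vert \tfrac{1}{n_T}\sum_{i=1}^{n_T} \bfz_i \bfz_i^\top - \bfI_{2k} \Bigr\Vert_2 \le 0.1.
\end{equation*}

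At this point I would invoke a standard covariance estimation bound for isotropic subgaussian vectors (e.g.\ a truncation plus matrix Bernstein argument, or Vershynin's Theorem 4.6.1 / Corollary 5.50 style result): for i.i.d.\ $\rho^2$-subgaussian isotropic vectors in $\bbR^{2k}$, there is an absolute constant $C$ such that whenever $n_T \ge C \rho^4 (2k + \log(20/\delta))$, the above spectral-norm deviation is at most $0.1$ with probability at least $1 - \delta/20$. This is exactly the hypothesis $n_T \gg \rho^4(k + \log(1/\delta))$ (absorbing the factor $2$ on $k$ and the $\log 20$ into the suppressed constant). Sandwiching back by $(\bfB^\top \bSigma_T \bfB)^{1/2}$ yields the stated two-sided Loewner inequality, and the conclusion is unconditional on $\bfB$ because the probability bound holds for every fixed $\bfB$.

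The one subtlety I would address is the possibility that $\bfB^\top \bSigma_T \bfB$ is rank-deficient, in which case the whitening step is ill-defined. I would handle this by restricting to the range of $\bfB^\top \bSigma_T \bfB$ (using its pseudo-inverse square root); the complementary null directions make the claim trivial on both sides because for any $\bfv$ with $\bfB^\top \bSigma_T \bfB \bfv = 0$ one has $\bbE[(\bfB \bfv)^\top \bfx_i \bfx_i^\top (\bfB \bfv)] = 0$, so the empirical quadratic form in direction $\bfv$ also vanishes almost surely. The main technical obstacle is thus only the black-box invocation of a subgaussian covariance concentration inequality with the correct dependence on $k$, $\rho$, and $\log(1/\delta)$; everything else is linear-algebraic bookkeeping.
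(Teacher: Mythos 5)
Your proof is correct. Note, however, that the paper itself contains no proof of this statement: it is imported verbatim as Claim A.2 of \cite{du2020few} and only cited. Your argument --- conditioning on $\bfB$, whitening by $(\bfB^{\top}\boldsymbol{\Sigma}_T\bfB)^{-1/2}$ to reduce to i.i.d.\ isotropic $\rho^2$-subgaussian vectors in $\bbR^{2k}$, invoking the standard subgaussian covariance concentration bound, and sandwiching back --- is the standard route and is essentially how the cited reference establishes it, so there is no substantive methodological difference to weigh; your explicit treatment of the rank-deficient case (via the pseudo-inverse square root and the almost-sure vanishing of the empirical quadratic form on the null space of $\bfB^{\top}\boldsymbol{\Sigma}_T\bfB$) is a point of rigor that the cited treatment glosses over.
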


\begin{proposition}[Lemma A.7 from \cite{du2020few}] \label{prop:du-result-target}
	For matrices $\bfA_1,\bfA_2$ (with same number of columns) such that $\bfA_1^{\top}\bfA_1 \succeq \bfA_2^{\top}\bfA_2$ and for matrices $\bfB_1,\bfB_2$ of compatible dimensions, we have:
	\begin{equation*}
		\fverts{ \bfP^{\perp}_{\bfA_1 \bfB_1} \bfA_1 \bfB_2 }_F^2 \geq \fverts{ \bfP^{\perp}_{\bfA_2 \bfB_1} \bfA_2 \bfB_2 }_F^2 
	\end{equation*} 
\end{proposition}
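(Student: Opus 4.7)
The plan is to reformulate both sides of the claimed inequality as least-squares residuals and then exploit the positive semidefinite ordering $\bfA_1^{\top}\bfA_1 \succeq \bfA_2^{\top}\bfA_2$ to compare the two minimizations pointwise at a common matrix argument.

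First, I would invoke the variational characterization of the orthogonal projection residual. For any matrix $\bfA$ of compatible dimensions, $\bfP_{\bfA \bfB_1} \bfA \bfB_2$ is the orthogonal projection of $\bfA \bfB_2$ onto the column space of $\bfA \bfB_1$, which coincides with the set $\{ \bfA \bfB_1 \bfW : \bfW \}$ of all matrices obtainable by varying $\bfW$. This yields
\[
\fverts{ \bfP^{\perp}_{\bfA \bfB_1} \bfA \bfB_2 }_F^2 \;=\; \min_{\bfW} \fverts{ \bfA \bfB_2 - \bfA \bfB_1 \bfW }_F^2 \;=\; \min_{\bfW} \fverts{ \bfA (\bfB_2 - \bfB_1 \bfW) }_F^2,
\]
where the minimization ranges over all matrices $\bfW$ of the appropriate shape; this identity remains valid when $\bfA \bfB_1$ is rank-deficient under the Moore--Penrose convention adopted in the paper. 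Applying this to both $\bfA_1$ and $\bfA_2$ rewrites both sides of the claim as minima of squared Frobenius norms over the same feasible set of $\bfW$'s.

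Next, I would establish the following pointwise comparison: for every matrix $\bfZ$ of appropriate shape,
\[
\fverts{ \bfA_1 \bfZ }_F^2 - \fverts{ \bfA_2 \bfZ }_F^2 \;=\; \text{Tr}\bigl( \bfZ^{\top} (\bfA_1^{\top}\bfA_1 - \bfA_2^{\top}\bfA_2) \bfZ \bigr) \;\geq\; 0.
\]
The inequality follows by writing $\bfA_1^{\top}\bfA_1 - \bfA_2^{\top}\bfA_2 = \bfC^{\top}\bfC$ (such $\bfC$ exists since the difference is PSD by hypothesis), whence the left-hand side equals $\fverts{ \bfC \bfZ }_F^2 \geq 0$. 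Hence $\fverts{ \bfA_1 \bfZ }_F^2 \geq \fverts{ \bfA_2 \bfZ }_F^2$ for every $\bfZ$.

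Finally, I would let $\bfW_1^{*}$ denote a minimizer of the variational form for $\bfA_1$, so that $\fverts{\bfP^{\perp}_{\bfA_1 \bfB_1} \bfA_1 \bfB_2}_F^2 = \fverts{\bfA_1 (\bfB_2 - \bfB_1 \bfW_1^{*})}_F^2$. Plugging $\bfZ = \bfB_2 - \bfB_1 \bfW_1^{*}$ into the pointwise comparison and then passing to the infimum on the right yields
\[
\fverts{ \bfP^{\perp}_{\bfA_1 \bfB_1} \bfA_1 \bfB_2 }_F^2 \geq \fverts{ \bfA_2 (\bfB_2 - \bfB_1 \bfW_1^{*}) }_F^2 \geq \min_{\bfW} \fverts{ \bfA_2 (\bfB_2 - \bfB_1 \bfW) }_F^2 = \fverts{ \bfP^{\perp}_{\bfA_2 \bfB_1} \bfA_2 \bfB_2 }_F^2,
\]
which is precisely the claim. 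The only delicate point worth writing out carefully is the validity of the variational characterization when $\bfA_1 \bfB_1$ or $\bfA_2 \bfB_1$ is rank-deficient, but this is standard for the Moore--Penrose projector; the rest of the argument is essentially mechanical, and I do not expect any serious obstacle.
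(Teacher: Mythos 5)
Your proof is correct. The paper itself never proves Proposition~\ref{prop:du-result-target}; it imports the statement verbatim as Lemma A.7 of \cite{du2020few}, and your argument—rewriting each projection residual as the least-squares minimum $\min_{\bfW}\fverts{\bfA(\bfB_2-\bfB_1\bfW)}_F^2$, using the PSD ordering to get the pointwise bound $\fverts{\bfA_1\bfZ}_F^2\geq\fverts{\bfA_2\bfZ}_F^2$, and then evaluating at the $\bfA_1$-minimizer before re-minimizing for $\bfA_2$—is precisely the standard proof given in that reference, and it is the same variational technique the paper itself employs to prove Proposition~\ref{prop:mat_cols}.
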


\begin{proposition} \label{prop:mat_cols}
	Consider matrices $\bfA \in \bbR^{a \times b}$  and $\bfB \in \bbR^{b \times c} $. Then for any $\bfu \in \bbR^{a}$, we have:
	\begin{equation*}
		\Vert \bfP^{\perp}_{\bfA } \bfu \Vert_2^2 \leq \Vert \bfP^{\perp}_{\bfA \bfB} \bfu \Vert_2^2
	\end{equation*} 
\end{proposition}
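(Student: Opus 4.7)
The plan is to reduce the claim to the basic geometric fact that the orthogonal projection onto a subspace is the unique nearest-point map, once one observes that $\mathcal{C}(\bfA\bfB)\subseteq \mathcal{C}(\bfA)$. Concretely, every column of $\bfA\bfB$ is a linear combination of the columns of $\bfA$ (with coefficients given by the corresponding column of $\bfB$), so the column space inclusion $\mathcal{C}(\bfA\bfB)\subseteq \mathcal{C}(\bfA)$ is immediate from the definition, regardless of the ranks of $\bfA$ or $\bfA\bfB$.

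Given this inclusion, I would argue as follows. The vector $\bfP_{\bfA\bfB}\bfu$ lies in $\mathcal{C}(\bfA\bfB)$, and hence in $\mathcal{C}(\bfA)$. By the defining optimality property of the orthogonal projection, $\bfP_{\bfA}\bfu$ is the unique minimizer of $\|\bfu - \bfv\|_2$ over $\bfv \in \mathcal{C}(\bfA)$, so in particular
\begin{equation*}
\verts{\bfu - \bfP_{\bfA}\bfu}_2 \;\leq\; \verts{\bfu - \bfP_{\bfA\bfB}\bfu}_2.
\end{equation*}
Rewriting the two sides using $\bfu - \bfP_{\bfA}\bfu = \bfP^{\perp}_{\bfA}\bfu$ and $\bfu - \bfP_{\bfA\bfB}\bfu = \bfP^{\perp}_{\bfA\bfB}\bfu$, and then squaring, yields exactly the desired inequality $\verts{\bfP^{\perp}_{\bfA}\bfu}_2^2 \leq \verts{\bfP^{\perp}_{\bfA\bfB}\bfu}_2^2$.

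An equivalent derivation I would mention briefly as a sanity check goes through the Pythagorean identity: since $\bfI = \bfP_{\bfA} + \bfP^{\perp}_{\bfA} = \bfP_{\bfA\bfB} + \bfP^{\perp}_{\bfA\bfB}$ with the two summands orthogonal, one has $\|\bfP^{\perp}_{\bfA}\bfu\|_2^2 = \|\bfu\|_2^2 - \|\bfP_{\bfA}\bfu\|_2^2$ and similarly for $\bfA\bfB$, so the claim is equivalent to $\|\bfP_{\bfA\bfB}\bfu\|_2^2 \leq \|\bfP_{\bfA}\bfu\|_2^2$. This in turn follows from $\bfP_{\bfA}\bfP_{\bfA\bfB} = \bfP_{\bfA\bfB}$ (valid because $\mathcal{C}(\bfA\bfB)\subseteq \mathcal{C}(\bfA)$ implies every vector in $\mathcal{C}(\bfA\bfB)$ is fixed by $\bfP_{\bfA}$) together with the fact that an orthogonal projector is a contraction on $\ell_2$.

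There is essentially no significant obstacle here, since the statement is a routine consequence of the subspace inclusion $\mathcal{C}(\bfA\bfB)\subseteq \mathcal{C}(\bfA)$. The only small point worth being explicit about is that the argument does not require any rank assumption on $\bfA$ or $\bfA\bfB$: $\bfP_{\bfA}$ and $\bfP_{\bfA\bfB}$ are defined through the Moore–Penrose pseudoinverse exactly so that they remain the orthogonal projectors onto the respective column spaces, and the nearest-point characterization used above is valid for projections onto any (closed) linear subspace.
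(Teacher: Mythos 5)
Your proof is correct and is essentially the paper's own argument: both rest on the variational (nearest-point/least-squares) characterization of the projection residual together with the inclusion $\mathcal{C}(\bfA\bfB)\subseteq\mathcal{C}(\bfA)$, which the paper phrases equivalently as restricting the minimization $\min_{\mathbf{r}\in\bbR^{b}}\Vert \bfA\mathbf{r}-\bfu\Vert_2^2$ to coefficient vectors $\mathbf{r}\in\mathcal{C}(\bfB)$. Your added Pythagorean/contraction remark is a fine sanity check but not needed.
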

\begin{proof}
	For given $\bfA \in \bbR^{a \times b}$ and $\bfB \in \bbR^{b \times c}$ and $\bfu \in \bbR^{a}$, we have:
	\begin{align*}
		\Vert \bfP^{\perp}_{\bfA } \bfu \Vert_2^2 & = \min_{\mathbf{r} \in \bbR^{b}} \Vert \bfA \mathbf{r} - \bfu \Vert_2^2 \\
		& \leq \min_{\mathbf{r} \in \mathcal{C}(\bfB)} \Vert \bfA \mathbf{r} - \bfu \Vert_2^2 \\
		& = \min_{\mathbf{s} \in \bbR^{c}} \Vert \bfA \bfB \mathbf{s} - \bfu \Vert_2^2 \\
		& = \Vert \bfP^{\perp}_{\bfA \bfB} \bfu \Vert_2^2
	\end{align*}
\end{proof}

\subsection{Some important results}
We now provide proof of results used to establish the resulting bound for Phase 1 training provided in Theorem~\ref{thm:low-dim-phase1}, which is proved later in Section~\ref{app:low-dim-phase1}. These results provide guarantees on empirical training of the source models (c.f. Lemma~\ref{lemm:multi_source_guarantee}) as well as the performance of empirically learned source representations on the target data (c.f. Lemma~\ref{lemm:rep-close}). \\
These results would also be useful for the proof of Theorem~\ref{thm:low-dim-combined} presented later in Section~\ref{sec:proof-phase2}.
\subsubsection{Proof of Lemma~\ref{lemm:rep-close}}
 We first prove Lemma~\ref{lemm:rep-close} which establishes a bound on using the learned empirical representation $\bfhV$ on the target data. 

[Restating Lemma~\ref{lemm:rep-close}]
Consider the matrix $\bfhV \in \bbR^{d \times q}$ formed by empirical source representations $\{\bfhB_i\}$ obtained from solving \eqref{eq:source-train} and the matrix $\bfV^{*} \in \bbR^{d \times l}$ formed from the true representations $\{\bfB_i^{*}\}$. For any $\bfb \in \mathbb{R}^{l}$ such that $\verts{\bfb}_2=1$, with probability at-least $1-\delta_1$, we have: 
\begin{align*}
	& \min_{\bfu \in \bbR^q} \verts{ \bfX_T \bfhV \bfu - \bfX_T \bfV^{*} \bfb }_2  \leq \frac{\sigma^2 n_T}{r n_S} \left( km + kdm \log(\kappa n_s) + \log\left(\frac{1}{\delta_1}\right)  \right)
\end{align*}

\begin{proof}
	We first note that:
	\begin{align*}
		\verts{\bfP^{\perp}_{\bfX_T  \bfhV } \bfX_{T} \bfV^{*}\bfb}_2 := \min_{\bfu \in \bbR^q} \verts{ \bfX_T \bfhV \bfu - \bfX_T \bfV^{*} \bfb }_2
	\end{align*}
	Using the fact that $\{ \widetilde{\bfw }_i \}$ span the space $\mathbb{R}^l$, we can write $\bfb = \widetilde{\bfW}^{*} \boldsymbol{\alpha} $ for some $\boldsymbol{\alpha} \in \bbR^{m}$ where $\boldsymbol{\alpha}$ is $\mathcal{O}(1)$. We have:
	\begin{align} \label{eq:interim9}
		\verts{\bfP^{\perp}_{\bfX_T  \bfhV} \bfX_{T} \bfV^{*}\bfb}_2^2 &= \verts{\bfP^{\perp}_{\bfX_T  \bfhV} \bfX_{T} \bfV^{*}\widetilde{\bfW}^{*} \boldsymbol{\alpha}}_2^2 \notag \\
		& \lesssim \verts{\bfP^{\perp}_{\bfX_T  \bfhV} \bfX_{T} \bfV^{*}\widetilde{\bfW}^{*} }_F^2 \notag \\
		& = \sum_{i=1}^{m} \verts{\bfP^{\perp}_{\bfX_T  \bfhV} \bfX_{T} \bfV^{*}\widetilde{\bfw}^{*}_i }_2^2 \notag \\
		& \stackrel{(a)}{\lesssim} n_T \sum_{i=1}^{m} \verts{\bfP^{\perp}_{\boldsymbol{\Sigma}_T^{\nicefrac{1}{2}}  \bfhV} \boldsymbol{\Sigma}_{T}^{\nicefrac{1}{2}} \bfV^{*}\widetilde{\bfw}^{*}_i }_2^2  \\
		& \stackrel{(b)}{\lesssim}  \frac{n_T}{r} \sum_{i=1}^{m} \verts{\bfP^{\perp}_{\boldsymbol{\Sigma}_i^{\nicefrac{1}{2}}  \bfhV} \boldsymbol{\Sigma}_{i}^{\nicefrac{1}{2}} \bfV^{*}\widetilde{\bfw}^{*}_i }_2^2 \notag \\
		& \stackrel{(c)}{\lesssim} \frac{n_T}{r n_S} \sum_{i=1}^{m} \verts{\bfP^{\perp}_{\bfX_i  \bfhV} \bfX_{i} \bfV^{*}\widetilde{\bfw}^{*}_i }_2^2 \notag
	\end{align}
	where $(a)$ follows from Claim~\ref{lemm:target_data} (with $\bfB = [ \bfhV \quad - \bfV^{*} ]$), $(b)$ follows from Assumption~\ref{assump:covariance} and $(c)$ from Claim~\ref{lemm:source_data}. We now note that $\bfV^{*} \widetilde{\bfw}^{*}_i = \bfB_i^{*} \bfw_{i}^{*}$.	We now note that $\bfhV$ is the matrix whose columns are an orthonormal basis of the set of columns of the matrices $\{\bfhB_i\}$. Thus for each $i \in [m]$, there exists a matrix $\bfC_i$ such that $\bfhB_i = \bfhV \bfC_i$. Now using the result of Proposition~\ref{prop:mat_cols} we have:
	\begin{align} \label{eq:interim10}
		& \verts{\bfP^{\perp}_{\bfX_T  \bfhV} \bfX_{T} \bfV^{*}\bfb}_2^2 \stackrel{Prop.~\ref{prop:mat_cols}}{\lesssim} \frac{ n_T}{r n_s} \sum_{i=1}^{m}\fverts{ \bfP^{\perp}_{\bfX_i  \bfhB_i} \bfX_i \bfB_i^{*} \bfw_{i}^{*} }_2^2  \notag \\
		& \qquad \leq \frac{ n_T}{r n_s} \sum_{i=1}^{m}\fverts{ \bfP_{\bfX_i  \bfhB_i} (\bfX_i \bfB_i^{*} \bfw_{i}^{*} + \bfz_i) - \bfX_i \bfB_i^{*} \bfw_{i}^{*} }_2^2  \notag \\
		& \qquad = \frac{ n_T}{r n_s} \sum_{i=1}^{m}\fverts{ \bfP_{\bfX_i  \bfhB_i} \bfy_i - \bfX_i \bfB_i^{*} \bfw_{i}^{*} }_2^2  \notag \\
		& \qquad = \frac{ n_T}{r n_s} \sum_{i=1}^{m}\fverts{ \bfX_i \bfhB_i \bfhw_{i} - \bfX_i \bfB_i^{*} \bfw_{i}^{*} }_2^2  \notag
	\end{align}
	The proof can then be concluded by using the result from Lemma~\ref{lemm:multi_source_guarantee} stated below that provides a bound for the trained empirical source models in \eqref{eq:source-train}.
\end{proof}

The following lemma establishes guarantees on the learned empirical source representations.
\begin{lemma}[Multi-Source training guarantee] \label{lemm:multi_source_guarantee}
	With probability at least $1 - \frac{\delta}{5}$, we have:
	\begin{align*}
		& \sum_{i=1}^{m} \fverts{ \bfX_i ( \bfhB_i \bfhw_{i} - \bfB_i^{*} \bfw_{i}^{*} ) }_2^2  \leq \sigma^2 \left( km + kdm \log(\kappa n_s) + \log\left(\frac{1}{\delta}\right)  \right)
	\end{align*}
\end{lemma}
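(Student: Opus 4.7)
\textbf{Proof plan for Lemma~\ref{lemm:multi_source_guarantee}.} The plan is to derive a basic inequality from the ERM optimality of $\{\bfhB_i,\bfhw_i\}$ and then bound a data-dependent noise cross term via a covering argument. Because the problem \eqref{eq:source-train} decouples over sources, each $(\bfhB_i,\bfhw_i)$ is the minimizer of $\fverts{\bfy_i-\bfX_i\bfB_i\bfw_i}_2^2$. Plugging in the true parameters $(\bfB_i^{*},\bfw_i^{*})$ as a competitor and using $\bfy_i=\bfX_i\bfB_i^{*}\bfw_i^{*}+\bfz_i$, a standard expansion and cancellation of squares yields, for every $i$, the basic inequality
\begin{equation*}
\fverts{\bfX_i(\bfhB_i\bfhw_i-\bfB_i^{*}\bfw_i^{*})}_2^2 \;\le\; 2\,\bfz_i^{\top}\bfX_i(\bfhB_i\bfhw_i-\bfB_i^{*}\bfw_i^{*}).
\end{equation*}
Summing over $i$, it suffices to bound the right-hand side by (half of) the left-hand side plus the claimed tail term.

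Next, I would observe that the vector $\bfu_i:=\bfX_i(\bfhB_i\bfhw_i-\bfB_i^{*}\bfw_i^{*})$ lies in the column space of $[\bfX_i\bfhB_i\ \ \bfX_i\bfB_i^{*}]$, a (random) subspace $S_i\subseteq\bbR^{n_s}$ of dimension at most $2k$. Hence $\bfz_i^{\top}\bfu_i=(\bfP_{S_i}\bfz_i)^{\top}\bfu_i$, and Cauchy--Schwarz together with the AM-GM step $2ab\le \tfrac12 a^2+2b^2$ give
\begin{equation*}
2\sum_{i=1}^m \bfz_i^{\top}\bfu_i \;\le\; \tfrac{1}{2}\sum_{i=1}^m \fverts{\bfu_i}_2^2 \;+\; 2\sum_{i=1}^m \fverts{\bfP_{S_i}\bfz_i}_2^2.
\end{equation*}
Absorbing the first sum into the left-hand side of the basic inequality reduces the proof to a high-probability bound of the form $\sum_i \fverts{\bfP_{S_i}\bfz_i}_2^2 \lesssim \sigma^2(km+kdm\log(\kappa n_s)+\log(1/\delta))$.

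The core technical step is a uniform (over the data-dependent $S_i$) tail bound. For a \emph{fixed} rank-$2k$ projection, $\fverts{\bfP\bfz_i}_2^2$ is $\sigma^2$ times a $\chi^2_{2k}$, whose sum over $m$ independent sources concentrates at rate $\sigma^2(2km+\log(1/\delta))$ by a standard Bernstein-type bound for chi-square random variables. To remove the dependence on $\bfz_i$, I would discretize over the Grassmannian of $2k$-dimensional subspaces of $\bbR^d$ via an $\eta$-net on the Stiefel manifold of $d\times 2k$ matrices with orthonormal columns; its covering number is $(C/\eta)^{O(dk)}$. A union bound over the product net across the $m$ sources then gives an extra $dkm\log(1/\eta)$ term in the exponent. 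Choosing $\eta$ polynomially small in $n_s$ and $\kappa$ (so the discretization error, which is controlled by the spectral norm of $\bfX_i^{\top}\bfX_i$, contributes a lower-order term) produces the $kdm\log(\kappa n_s)$ factor in the bound. Combined with the chi-square tail and a final union bound, this gives the stated estimate with probability at least $1-\delta/5$.

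The main obstacle is the uniform-over-subspaces control: one must carefully transfer from the net to arbitrary subspaces by bounding the perturbation of $\fverts{\bfP\bfz}_2^2$ in terms of the net resolution $\eta$ and of $\fverts{\bfz}_2$, and then re-absorb the resulting cross terms. The logarithmic $\log(\kappa n_s)$ factor is the price of this discretization; the ratio $\kappa$ of extreme eigenvalues of the source covariances enters because we pass between subspaces of $\bbR^d$ and their images $\bfX_i\cdot$ via Claim~\ref{lemm:source_data}, and the resolution must be taken fine enough relative to the conditioning of $\bfX_i^{\top}\bfX_i$. Once this uniform bound is established, the claim follows directly from the basic inequality derived in the first step.
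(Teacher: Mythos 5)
Your proposal is correct and follows essentially the same route as the paper: a basic inequality from ERM optimality, reduction of the noise cross term to the projection of $\bfz_i$ onto the (at most) $2k$-dimensional data-dependent subspace spanned by $[\bfX_i\bfhB_i\ \ \bfX_i\bfB_i^{*}]$, a chi-square tail for a fixed net point, an $\eta$-net over the product of Stiefel manifolds $\mathcal{O}_{d\times 2k}^{m}$ yielding the $kdm\log(\kappa n_s)$ entropy term, and control of the discretization error via the spectral bounds of Claim~\ref{lemm:source_data} (which is where $\kappa$ enters). The only cosmetic difference is that you absorb the cross term by AM--GM up front, whereas the paper keeps the product form and solves the resulting quadratic inequality in $\Vert\mathcal{X}(\bDelta)\Vert_F$; these are equivalent.
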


\begin{proof}
	First we instantiate Claim~\ref{lemm:source_data}, which happens with probability atleast $1-\frac{\delta}{10}$.
	For the given source datasets and matrix $\bfA \in \bbR^{d \times m}$ with columns $\{ \mathbf{a}_i \}$, we define the map $\mathcal{X}(\bfA)$ as $\mathcal{X}(\bfA) =  [ \bfX_1 \mathbf{a}_1 \quad \bfX_2 \mathbf{a}_2 \hdots \quad \bfX_m \mathbf{a}_m   ] $.
	Now consider the matrix $\boldsymbol{\bDelta} \in \bbR^{d \times m}$ whose columns are given by $\{ \bfhB_i \bfhw_{i} - \bfB^{*}_{i} \bfw^{*}_{i} \}_{i=1}^{m}$.	For convenience of notation, we define $\mathcal{X}(\mathbf{\bDelta}):= [ \bfX_1 (\bfhB_1 \bfhw_1 - \bfB^{*}_1 \bfw^{*}_1) \quad  \hdots \quad \bfX_m(\bfhB_m \bfhw_m - \bfB^{*}_m \bfw^{*}_m)  ]$. We are interested in providing a bound for the quantity $\verts{\mathcal{X}(\mathbf{\bDelta})}_F^2$. The $i^{th}$ column of the matrix $\boldsymbol{\bDelta}$ can be decomposed as $\bfR_i \bfr_i$ where $\bfR_i \in \mathcal{O}_{d \times 2k} $ (set of tall orthonormal matrices in $d \times 2k$) and $\bfr_i \in \bbR^{2k}$. 
	\begin{align*}
		\boldsymbol{\bDelta}  = [\bfR_1 \bfr_1 \quad \bfR_2 \bfr_2 \hdots \quad \bfR_m \bfr_m  ] 
	\end{align*}
	For each $i \in [m]$, we now decompose $\bfX_i \bfR_i = \bfU_i \bfQ_i $ (where $\bfU_i \in \mathcal{O}_{n_s \times 2k}$ and $\bfQ \in \bbR^{2k \times 2k}$). Since $\{\bfhB_i , \bfhw_{i} \}_{i=1}^{m}$ are the optimal solutions for the source regression problems, we have $ \sum_{i=1}^{m} \Vert \bfy_i - \bfX_i \bfhB_i \bfhw_{i}   \Vert_2^2 \leq  \sum_{i=1}^{m} \Vert \bfy_i - \bfX_i \bfB_i^{*} \bfw_{i}^{*}   \Vert_2^2 $. Substituting $\bfy_i = \bfX_i \bfB_i^{*} \bfw_{i}^{*} + \bfz_i$ for $i \in [m]$, we get $\Vert \mathcal{X}(\boldsymbol{\bDelta}) \Vert_F^2 \leq 2 \lragnle{\bfZ, \mathcal{X}(\boldsymbol{\bDelta})}$ (where the inner product of matrices is trace of their product) and we denote the matrix of noise vectors as $\bfZ := [\bfz_1 \quad \bfz_2 \quad \hdots \quad \bfz_m ] \in \bbR^{n_s \times m}.$ Now:
	\begin{align}
		\lragnle{\bfZ , \mathcal{X}(\boldsymbol{\bDelta})  } &= \sum_{i=1}^{m} \bfz_i^{\top} \bfX_i \bfR_i \bfr_i = \sum_{i=1}^{m} \bfz_i^{\top} \bfU_i \bfQ_i \bfr_i \\
		& \leq \sum_{i=1}^{m} \fverts{\bfU_i^{\top} \bfz_i}_2 \fverts{\bfQ_i \bfr_i}_2 \notag \\
		& \leq \sqrt{ \sum_{i=1}^{m} \fverts{\bfU_i^{\top} \bfz_i}_2 } \sqrt{\sum_{i=1}^{m} \fverts{\bfU_i\bfQ_i \bfr_i}_2} \notag \\
		& = \sqrt{ \sum_{i=1}^{m} \fverts{\bfU_i^{\top} \bfz_i}_2 } \, \, \Vert \mathcal{X}(\bDelta) \Vert_F \label{eq:interim1}
	\end{align}
	We will now provide a bound for the first term in the product on the R.H.S. of \eqref{eq:interim1}. Since $\bfU_i$ depends on $\bfR_i$, it also depends on the value of $\bfZ$. To provide a bound, we use an $\epsilon$-net argument to cover all possible values of $\{\bfR_i\}_{i=1}^{m}$. We first consider a fixed set of matrices $\{ \bfbR_i \}_{i=1}^{m} \subset \mathcal{O}_{d \times 2k}^{m}$. For these given matrices, we can find decompose $\bfX_i \bfbR_i = \bfbU_i \bfQ_i $ for $i \in [m]$, where $\{\bfbU_i\}_{i=1}^{m} \subset \mathcal{O}_{n_T \times 2k}^{m} $ do not depend on $\bfZ$. Thus we have $ \frac{1}{\sigma^2 }\sum_{i=1}^{m} \fverts{\bfU_i^{\top} \bfz_i}_2 \sim \chi^2(2km)$. Thus w.p at least $1 - \delta'$, we have:
	\begin{equation} \label{eq:interim2}
		\sum_{i=1}^{m} \fverts{\bfU_i^{\top} \bfz_i}_2 \lesssim \sigma^2 \left(  km + \log\left( \frac{1}{\delta'} \right) \right)
	\end{equation} 
	Hence for the given $\{ \bfbR_i \}_{i=1}^{m} $, using the result from \eqref{eq:interim2} in \eqref{eq:interim1}, we have:
	\begin{equation*} 
		\lragnle{\bfZ , \mathcal{X}(\boldsymbol{\bar{\bDelta}})  } \lesssim   \sigma^2 \left(  km + \log\left( \frac{1}{\delta'} \right) \right) \Vert \mathcal{X}(\bar{\bDelta}) \Vert_F
	\end{equation*}
	where $\bar{\bDelta} = [\bfbR_1 \bfr_1 \quad \bfbR_2 \bfr_2 \hdots \quad \bfbR_m \bfr_m ]$. Now we consider an $\frac{\epsilon}{m}$-net of $\mathcal{O}_{d \times 2k}^{m}$ denoted by $\mathcal{N}$ of size $|\mathcal{N}| \leq \left( \frac{6m \sqrt{2k}}{\epsilon} \right)^{2kdm}.$ Using the union bound, with probability at least $1 - |\mathcal{N}|\delta'$:
	\begin{equation} \label{eq:interim3}
		\lragnle{\bfZ , \mathcal{X}(\boldsymbol{\bar{\bDelta}})  } {\lesssim}   \sigma^2 \left(  km {+} \log\left( \frac{1}{\delta'} \right) \right) \Vert \mathcal{X}(\bar{\bDelta}) \Vert_F,\,  \forall \{ \bfbR_i \}_{i=1}^{m} \subset \mathcal{N}
	\end{equation}
	Choose $\delta' =  \frac{\delta}{20 \left( \frac{6m \sqrt{2k}}{\epsilon} \right)^{2kdm} }$, then the above holds with probability at least $1 - \frac{\delta}{20}$.
	We will now use the results from the following claim, which is proved in Section~\ref{subsubsec:claim3} below.
	\begin{claim} \label{claim:interim1}
		Under the assumptions of Theorem~\ref{thm:low-dim-phase1}, the following hold:
		\begin{enumerate}
			\item W.p at least $1 - \frac{\delta}{20}$, $ \Vert \bfZ \Vert_F^2  \lesssim  \sigma^2 \left( n_s m + \log\left(\frac{1}{\delta}\right) \right) $ 
			\item If the result in 1) holds and Claim~\ref{lemm:source_data} holds , then $ \Vert \bDelta \Vert_F^2 \lesssim  \frac{\sigma^2 ( n_s m + \log( \frac{1}{\delta} ) ) }{n_s \lambda_{low}} $ where $\lambda_{low} = \min_{i \in [m]} \lambda_{\min}(\boldsymbol{\Sigma}_i) $
			\item If the results in 1), 2) above hold and Claim~\ref{lemm:source_data} holds, then $ \Vert \mathcal{X}( [\bfR_1 \bfr_1 \quad \hdots \bfR_m \bfr_m  ]   - [\bfbR_1 \bfr_1 \quad \hdots \bfbR_m \bfr_m  ]   )  \Vert \lesssim \frac{\kappa \epsilon^2}{m^2} \sigma^2 \left( n_s m + \log\left( \frac{1}{\delta}  \right) \right) $ for some $\{\bfbR_i\} \subset \mathcal{N}$ where $\kappa = \frac{\max_{i \in [m]} \lambda_{\min}(\boldsymbol{\Sigma}_i)}{\min_{i \in [m]} \lambda_{\min}(\boldsymbol{\Sigma}_i)}$
		\end{enumerate}
	\end{claim}
	We now use the results of the Claim~\ref{claim:interim1} to complete the proof of the lemma. We note that there exists some $\bar{\bDelta}$ with $\{ \bfbR_i \}_{i=1}^{m} \subset \mathcal{N}$ such that:
	\begin{align*}
		& \frac{1}{2}  \Vert \mathcal{X}(\bDelta) \Vert_F^2  \leq \lragnle{\bfZ, \mathcal{X}(\bDelta)} \\
		& = \lragnle{ \bfZ, \mathcal{X}(\bar{\bDelta}) } + \lragnle{ \bfZ, \mathcal{X}( \bDelta - \bar{\bDelta} ) } \\
		& \stackrel{(a)}{\lesssim} \sigma \sqrt{\left( km + \log\left(\frac{1}{\delta'}\right) \right)} . \Vert \mathcal{X}(\bar{\bDelta}) \Vert_F^2 + \Vert \bfZ \Vert_F \Vert  \mathcal{X}( \bDelta {-} \bar{\bDelta} ) \Vert_F  \\
		& \stackrel{(b)}{\leq} \sigma \sqrt{\left( km + \log\left(\frac{1}{\delta'}\right) \right)} . \left(\Vert \mathcal{X}(\bDelta) \Vert_F  +  \Vert  \mathcal{X}( \bDelta {-} \bar{\bDelta} ) \Vert_F \right)  + \sigma \sqrt{\left( n_s m {+} \log\left(\frac{1}{\delta}\right) \right)} \Vert  \mathcal{X}( \bDelta - \bar{\bDelta} ) \Vert_F  \\
		& \stackrel{(c)}{\lesssim} \sigma \Vert \mathcal{X}(\bDelta) \Vert_F \sqrt{\left( km + \log\left(\frac{1}{\delta'}\right) \right)} + \frac{\sqrt{\kappa} \epsilon}{m} \sigma^2 \left( n_s m + \log\left( \frac{1}{\delta'}  \right) \right)
	\end{align*}
	where $(a)$ follows from \eqref{eq:interim3} w.p  $\geq 1-\frac{\delta}{20}$, $(b)$ from 1) in Claim~\ref{claim:interim1}); w.p.  $\geq 1 - \frac{\delta}{20}$ and $(c)$ uses the fact that $\delta' < \delta, k \leq n_s$, and 3) in Claim~\ref{claim:interim1}. Since the above result gives an inequality in terms of $\Vert \mathcal{X}(\bDelta) \Vert_F^2$ and $\Vert \mathcal{X}(\bDelta) \Vert_F$, we can conclude the following:
	\begin{align*}
		\Vert \mathcal{X}(\bDelta) \Vert_F  \lesssim \max & \left\{ \sigma \sqrt{\left( km + \log\left(\frac{1}{\delta'}\right) \right)},  \sigma \sqrt{\frac{\sqrt{\kappa} \epsilon}{m}  \left( n_s m + \log\left( \frac{1}{\delta'}  \right) \right)}  \right\}
	\end{align*}
	We choose $\epsilon = \frac{ km }{n_s \sqrt{\kappa}}$, and note that $n_S \gg k$, which gives:
	\begin{align*}
		\Vert \mathcal{X}(\bDelta) \Vert_F 
		& \leq \sigma \sqrt{\left( km + \log\left(\frac{1}{\delta'}\right) \right)}
	\end{align*}
	Substituting the value of $\delta' = \frac{\delta}{20 \left( \frac{6m \sqrt{2k}}{\epsilon} \right)^{2kdm} } $ and $\epsilon = \frac{ km }{n_s \sqrt{\kappa}}$:
	\begin{align*}
		\Vert \mathcal{X}(\bDelta) \Vert_F & \lesssim \sigma \sqrt{ km + kdm \log\left( \frac{m k}{\epsilon} \right) + \log \left( \frac{1}{\delta} \right)  } \\
		& \leq \sigma \sqrt{ km + kdm \log\left( n_s \kappa \right) + \log \left( \frac{1}{\delta} \right)  }
	\end{align*}
	Hence the following holds with probability at least 1 - $\left(  \frac{\delta}{10} + \frac{\delta}{20} + \frac{\delta}{20} \right)$:
	\begin{align*}
		\Vert \mathcal{X}(\bDelta) \Vert_F^2 \lesssim  \sigma^2  \left[km + kdm \log\left( n_s \kappa \right) + \log \left( \frac{1}{\delta} \right)  \right]
	\end{align*}
\end{proof}

\subsubsection{Proof of Claim~\ref{claim:interim1}}\label{subsubsec:claim3}
\begin{enumerate}
	\item This follows from the fact that $ \frac{1}{\sigma^2} \Vert \bfZ \Vert_F^2 \sim \chi(n_s m) $
	\item \begin{align*}
		\Vert \mathcal{X}&(\bDelta)  \Vert_F^2   = \sum_{i=1}^{m} \Vert \bfX_i (\bfhB_i \bfhw_{i_0} - \bfB^{*}_i \bfw_{i}^{*}) \Vert_2^2 \\
		& = \sum_{i=1}^{m} (\bfhB_i \bfhw_{i_0} - \bfB^{*}_i \bfw_{i}^{*})^{\top} \bfX_i^{\top} \bfX_i (\bfhB_i \bfhw_{i_0} - \bfB^{*}_i \bfw_{i}^{*}) \\
		& \gtrsim n_s  \sum_{i=1}^{m} (\bfhB_i \bfhw_{i_0} - \bfB^{*}_i \bfw_{i}^{*})^{\top} \boldsymbol{\Sigma}_i (\bfhB_i \bfhw_{i_0} - \bfB^{*}_i \bfw_{i}^{*}) \\
		& \geq n_s \sum_{i=1}^{m} \lambda_{\min}(\boldsymbol{\Sigma}_i) \Vert (\bfhB_i \bfhw_{i_0} - \bfB^{*}_i \bfw_{i}^{*}) \Vert_2^2\\
		& \geq n_s \lambda_{low} \Vert \bDelta \Vert_F^2
	\end{align*}
	where $\lambda_{low}:= \min_{i \in [m]} \lambda_{\min}(\boldsymbol{\Sigma}_i) $. Since $\Vert \mathcal{X}(\bDelta) \Vert_F^2  \leq 2\lragnle{ \bfZ, \mathcal{X}(\bDelta) } \leq 2 \Vert \bfZ \Vert_F \Vert \mathcal{X}(\bDelta) \Vert_F $, we have $\Vert \mathcal{X}(\bDelta) \Vert_F \leq 2 \Vert \bfZ \Vert_F$. Using the result from part 1. of the claim statement combined with the upper bound derived above, we have: 
	\begin{align*}
		\Vert \bDelta \Vert_F^2 \lesssim \frac{\sigma^2}{n_s \lambda_{low}}  \left(  n_s m + \log\left( \frac{1}{\delta} \right) \right)
	\end{align*}
	\item For some $\{\bfbR_i\} \subset \mathcal{N}$ we have $\sum_{i=1}^{m} \Vert \bfR_i - \bfbR_i \Vert_F \leq \sum_{i=1}^{m} \frac{\epsilon}{m} = \epsilon $. Therefore:
	\begin{align*}
		\Vert \mathcal{X}(\bDelta {-} \bar{\bDelta}) \Vert_F^2 & 
		= \sum_{i=1}^{m} \Vert \bfX_i (\bfR_i - \bfbR_i) \bfr_i \Vert_2^2 \\
		& \leq \sum_{i=1}^{m} \Vert \bfX_i \Vert_2^2 \Vert \bfR_i - \bfbR_i \Vert_F^2 \Vert \bfr_i \Vert_2^2 \\
		&  \lesssim \frac{n_s \epsilon^2}{m^2}  \sum_{i=1}^{m} \Vert \boldsymbol{\Sigma}_i \Vert_2^2  \Vert \bfr_i \Vert_2^2 \\
		& \lesssim \frac{n_s \epsilon^2  \lambda_{high} }{m^2} \sum_{i=1}^{m} \Vert \bfr_i \Vert_2^2  \\
		& \stackrel{(a)}{=} \frac{n_s \epsilon^2  \lambda_{high} }{m^2} \Vert \bDelta \Vert_F^2 \\
		& \stackrel{(b)}{\lesssim} \frac{n_s \epsilon^2  \lambda_{high} }{m^2} \frac{\sigma^2 ( n_s m + \log( \frac{1}{\delta} ) ) }{n_s \lambda_{low}} \\
		& = \frac{\kappa \epsilon^2 \lambda_{high} \sigma^2 }{ m^2 \lambda_{low}} \left( n_s m + \log\left( \frac{1}{\delta} \right) \right)
	\end{align*} 
	Here, $\lambda_{high}:= \max_{i \in [m]} \lambda_{\max}(\boldsymbol{\Sigma}_i) $
	where to arrive at $(a)$, we have used the fact that $\{\bfR_i\}$ have orthonormal columns and used the definition of $\bDelta$, and $(b)$ follows from using 2) from the claim statement.
\end{enumerate}


\subsection{Proof of Theorem~\ref{thm:low-dim-phase1}} \label{app:low-dim-phase1}
Having established the helper results above, we now provide a proof for Theorem~\ref{thm:low-dim-phase1}.

[Restating Theorem~\ref{thm:low-dim-phase1}]
Fix a failure probability $\delta \in  (0, 1)$ and further assume $2k \leq \min \{d,m \} $ and the number of samples in the sources and target satisfy $n_s \gg \rho^4 (d + \log(\nicefrac{m}{\delta})) $ and $n_{T_1} \gg \rho^4( \max\{l,q\} + \log(\nicefrac{1}{\delta}) )$, respectively. Define $\kappa = \frac{\max_{i \in [m]} \lambda_{\max}(\boldsymbol{\Sigma}_i) }{\min_{i \in [m]} \lambda_{\min}(\boldsymbol{\Sigma}_i)}$ where $\lambda_{\max} (\boldsymbol{\Sigma}_i)$ denotes the maximum eigenvalue of $\boldsymbol{\Sigma}_i$. Then with probability at least $1 - \delta$ over the samples, under Assumptions \ref{assump:source_target_map} - \ref{assump:target_task_dist}, the expected excess risk of the learned predictor $\bfhw_T$ on the target ($\bfx \rightarrow \bfx^{\top} \bfhV \bfhw_T $) for Phase 1 satisfies:
\begin{align*}
	& \bbE [\text{EER}( \btheta_{\text{Phase}_1}, \btheta_T^{*})]  \lesssim   \frac{\sigma^2}{n_{T_1}} ( q  + \log(\nicefrac{1}{\delta})) + \epsilon^2  + \sigma^2\left[ \frac{1}{r n_sm}  \log\left(\frac{1}{\delta}\right) +   \left(\frac{kd \log(\kappa n_s) + k}{r n_s}\right) \right]  
\end{align*}

\begin{proof}
	We will first instantiate Lemma \ref{lemm:source_data}. We then instantiate Lemma~\ref{lemm:target_data} twice, once with $[ \bfhV \,\, -\bfV^{*}] $ and the other time with $ [\bfB_T^{*} \,\, -\bfV^{*}]$. Then we assume that the result from Lemma~\ref{lemm:multi_source_guarantee} holds. All these events happen together with probability at least $1 - \frac{2 \delta}{5}$.
	The expected error for the target distribution is given by:
	\begin{align*}
		& \text{EER}( \btheta_{\text{Phase}_1}, \btheta_T^{*})  = \bbE_{\bfx \sim p_{T}} \left[ \bfx^{\top} \bfB_T^{*} \bfw_{T}^{*} -    \bfx^{\top} \bfhV \bfhw_T \right]^2  \\
		& = \fverts{\boldsymbol{\Sigma}_T^{\nicefrac{1}{2}} \left(  \bfhV \bfhw_T -  \bfB_T^{*} \bfw_{T}^{*} \right)}_2^2  \\
		& \lesssim \fverts{\boldsymbol{\Sigma}_T^{\nicefrac{1}{2}} \left(  \bfhV \bfhw_T -  \bfV^{*} \widetilde{\bfw}_T^{*} \right)}_2^2 + \fverts{\boldsymbol{\Sigma}_T^{\nicefrac{1}{2}} \left(   \bfB_T^{*} \bfw_{T}^{*} -  \bfV^{*} \widetilde{\bfw}_T^{*} \right)}_2^2  \\
		& \stackrel{(a)}{\leq} \fverts{\boldsymbol{\Sigma}_T^{\nicefrac{1}{2}} \left(   \bfhV \bfhw_T -  \bfV^{*} \widetilde{\bfw}_T^{*} \right)}_2^2 + \epsilon^2 \\
		& \stackrel{(b)}{\lesssim} \frac{1}{n_T} \fverts{ \bfX_T \left(  \bfhV \bfhw_T -  \bfV^{*} \widetilde{\bfw}_T^{*} \right)}_2^2 + \epsilon^2 \\
		& = \frac{1}{n_T} \fverts{  \bfP_{\bfX_T  \bfhV} \bfy_T -  \bfX_T \bfV^{*} \widetilde{\bfw}_T^{*} }_2^2 + \epsilon^2 \\
		& \lesssim \frac{1}{n_T} \fverts{ \bfP_{\bfX_T  \bfhV} (\bfX_T \bfV^{*} \widetilde{\bfw}_T^{*} + \bfz_T ) - \bfX_T \bfV^{*} \widetilde{\bfw}_T^{*} )}_2^2  + \frac{1}{n_T}	\fverts{  \bfP_{\bfX_T  \bfhV} (\bfX_T \bfB_T^{*} \bfw_{T}^{*} - \bfX_T \bfV^{*} \widetilde{\bfw}_T^{*})  }_2^2 + \epsilon^2 \\
		\intertext{where (a) follows from Assumption~\ref{assump:source_target_map} and (b) uses Claim~\ref{lemm:target_data}. Using the fact that $\Vert \bfP_{\bfX_T \bfhV}  \Vert_2 \leq 1 $ (since $\bfP_{\bfX_T \bfhV}$ is a projection matrix) and using Claim~\ref{lemm:target_data}, we have:}
		& \lesssim \frac{1}{n_T} \fverts{ \bfP_{\bfX_T  \bfhV} (\bfX_T \bfV^{*} \widetilde{\bfw}_T^{*} + \bfz_T ) - \bfX_T \bfV^{*} \widetilde{\bfw}_T^{*} )}_2^2 + \epsilon^2 \\
		& \lesssim \frac{1}{n_T} \fverts{ \bfP^{\perp}_{\bfX_T  \bfhV} \bfX_T \bfV^{*} \widetilde{\bfw}_T^{*}}_2^2 + \epsilon^2 + \frac{1}{n_T} \fverts{\bfP_{\bfX_T  \bfhV} \bfz_T }_2^2
	\end{align*}
	where the first inequality follows from Assumption~\ref{assump:source_target_map} and Claim~\ref{lemm:target_data}.
	We can take the expectation over the distribution of $\bfw_T^{*}$ and use Assumption~\ref{assump:target_task_dist} to yield:
	\begin{align} \label{eq:multi_1}
		\bbE_{\widetilde{\bfw}_T^{*}} [\text{EER}( \btheta_{\text{Phase}_1}, \btheta_T^{*})] & \lesssim \frac{1}{n_Tl} \fverts{ \bfP^{\perp}_{\bfX_T  \bfhV} \bfX_T \bfV^{*} }_F^2 + \epsilon^2 + \frac{1}{n_T} \fverts{\bfP_{\bfX_T \bfhV} \bfz_T }_2^2
	\end{align}
	We now make use of the following lemma, which is proved below in Section~\ref{subsubsec:lemm3} that provides a bound for the first term in \eqref{eq:multi_1}.
	\begin{lemma}[Target Training Guarantee] \label{lemm:multi_target_gurantee}
		Assuming the results in Claim~\ref{lemm:source_data}, Claim~\ref{lemm:target_data} (with $\bfB = [ \bfhV \quad - \bfV^{*} ]$) and Lemma~\ref{lemm:multi_source_guarantee} hold, we then have:
		\begin{align*}
			\fverts{ \bfP^{\perp}_{\bfX_T  \bfhV} \bfX_T \bfV^{*} }_F^2 & \lesssim \frac{n_T \sigma^2}{r n_s\sigma_l^2(\widetilde{\bfW}^{*}) }  \left( km {+} kdm \log(\kappa n_s) {+} \log\left(\frac{1}{\delta}\right)  \right) 
		\end{align*}
	\end{lemma}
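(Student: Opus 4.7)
The plan is to reduce the target-side Frobenius norm $\fverts{\bfP^{\perp}_{\bfX_T \bfhV} \bfX_T \bfV^{*}}_F^2$ all the way to the source training error already bounded by Lemma~\ref{lemm:multi_source_guarantee}. The proof structurally mirrors that of Lemma~\ref{lemm:rep-close}, but now the quantity involves the full matrix $\bfV^{*}$ rather than a single direction $\bfV^{*}\bfb$; the factor $1/\sigma_l^2(\widetilde{\bfW}^{*})$ in the target bound is precisely what arises from inverting the diversity assumption on $\widetilde{\bfW}^{*}$ to reconstruct every column of $\bfV^{*}$ from the source head vectors.

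The first step is to pass from $\bfV^{*}$ to $\bfV^{*}\widetilde{\bfW}^{*}$, whose columns are the true source parameters $\{\bfB_i^{*}\bfw_i^{*}\}_{i=1}^m$. By Assumption~\ref{assump:source_tasks}, $\widetilde{\bfW}^{*}\in\bbR^{l\times m}$ has full row rank, so its Moore--Penrose pseudo-inverse $\widetilde{\bfW}^{*\dagger}=(\widetilde{\bfW}^{*})^{\top}(\widetilde{\bfW}^{*}(\widetilde{\bfW}^{*})^{\top})^{-1}$ satisfies $\widetilde{\bfW}^{*}\widetilde{\bfW}^{*\dagger}=\bfI_l$ and $\|\widetilde{\bfW}^{*\dagger}\|_2=1/\sigma_l(\widetilde{\bfW}^{*})$. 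Writing $\bfV^{*}=\bfV^{*}\widetilde{\bfW}^{*}\widetilde{\bfW}^{*\dagger}$ and applying the sub-multiplicativity of the Frobenius norm with the operator norm yields
$$\fverts{\bfP^{\perp}_{\bfX_T\bfhV}\bfX_T\bfV^{*}}_F^2 \;\le\; \frac{1}{\sigma_l^2(\widetilde{\bfW}^{*})}\fverts{\bfP^{\perp}_{\bfX_T\bfhV}\bfX_T\bfV^{*}\widetilde{\bfW}^{*}}_F^2 \;=\; \frac{1}{\sigma_l^2(\widetilde{\bfW}^{*})}\sum_{i=1}^m \|\bfP^{\perp}_{\bfX_T\bfhV}\bfX_T\bfB_i^{*}\bfw_i^{*}\|_2^2.$$

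Next, since each $\bfhB_i$ is contained in the column span of $\bfhV$, there exist matrices $\bfC_i$ with $\bfhB_i=\bfhV\bfC_i$, and Proposition~\ref{prop:mat_cols} gives $\|\bfP^{\perp}_{\bfX_T\bfhV}\bfu\|_2\le\|\bfP^{\perp}_{\bfX_T\bfhB_i}\bfu\|_2$ termwise. I would then replay the covariance-swapping chain used in the proof of Lemma~\ref{lemm:rep-close}: Claim~\ref{lemm:target_data} (instantiated with $\bfB=[\bfhV\;\,-\bfV^{*}]$) converts $\tfrac{1}{n_T}\bfX_T^{\top}\bfX_T$ to $\boldsymbol{\Sigma}_T$, Assumption~\ref{assump:covariance} together with Proposition~\ref{prop:du-result-target} replaces $\boldsymbol{\Sigma}_T^{1/2}$ by $\boldsymbol{\Sigma}_i^{1/2}/\sqrt{r}$ inside each projection, and Claim~\ref{lemm:source_data} converts $\boldsymbol{\Sigma}_i$ back to $\tfrac{1}{n_s}\bfX_i^{\top}\bfX_i$. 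The end product of this chain is
$$\sum_{i=1}^m \|\bfP^{\perp}_{\bfX_T\bfhB_i}\bfX_T\bfB_i^{*}\bfw_i^{*}\|_2^2 \;\lesssim\; \frac{n_T}{r\,n_s}\sum_{i=1}^m \|\bfP^{\perp}_{\bfX_i\bfhB_i}\bfX_i\bfB_i^{*}\bfw_i^{*}\|_2^2.$$

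Finally, I close the loop by invoking optimality of $\bfhw_i$ in \eqref{eq:source-train}, which forces $\bfX_i\bfhB_i\bfhw_i=\bfP_{\bfX_i\bfhB_i}(\bfX_i\bfB_i^{*}\bfw_i^{*}+\bfz_i)$. Decomposing into the range and its orthogonal complement of $\bfX_i\bfhB_i$ gives the pointwise bound $\|\bfP^{\perp}_{\bfX_i\bfhB_i}\bfX_i\bfB_i^{*}\bfw_i^{*}\|_2^2\le\|\bfX_i\bfhB_i\bfhw_i-\bfX_i\bfB_i^{*}\bfw_i^{*}\|_2^2$, and summing over $i$ and applying Lemma~\ref{lemm:multi_source_guarantee} yields the stated bound. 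The only genuinely new piece relative to Lemma~\ref{lemm:rep-close} is the pseudo-inverse argument in the first step; the rest is an accounting exercise replaying the established chain, so I expect the main bookkeeping obstacle to be merely ensuring that the various events (Claims~\ref{lemm:source_data},~\ref{lemm:target_data} and Lemma~\ref{lemm:multi_source_guarantee}) have been assumed to hold simultaneously, which is precisely the hypothesis of the lemma.
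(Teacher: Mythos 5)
Your proposal is correct and follows essentially the same route as the paper: the paper likewise extracts the factor $1/\sigma_l^2(\widetilde{\bfW}^{*})$ via the inequality $\sigma_l^2(\widetilde{\bfW}^{*})\fverts{\bfA}_F^2 \le \fverts{\bfA\widetilde{\bfW}^{*}}_F^2$ (your pseudo-inverse formulation is the same fact), decomposes $\fverts{\bfP^{\perp}_{\bfX_T\bfhV}\bfX_T\bfV^{*}\widetilde{\bfW}^{*}}_F^2$ into the per-source terms $\|\bfP^{\perp}\bfX_T\bfB_i^{*}\bfw_i^{*}\|_2^2$, and then replays the covariance-swapping chain of Lemma~\ref{lemm:rep-close} (Claim~\ref{lemm:target_data}, Assumption~\ref{assump:covariance} with Proposition~\ref{prop:du-result-target}, Claim~\ref{lemm:source_data}, Proposition~\ref{prop:mat_cols}, optimality of $\bfhw_i$) before invoking Lemma~\ref{lemm:multi_source_guarantee}. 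The only difference is the order of operations — you apply the singular-value step before swapping $\bfX_T$ for $\boldsymbol{\Sigma}_T^{\nicefrac{1}{2}}$, the paper after — which is immaterial.
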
 
	Substituting the result from Lemma~\ref{lemm:multi_target_gurantee} in \eqref{eq:multi_1} and using $\sigma_l^2(\widetilde{\bfW}^{*}) \geq \frac{m}{l}$, the following bound holds with probability at least $1 - \frac{2 \delta}{5}$:
	\begin{align*}
		& \bbE_{\widetilde{\bfw}_T^{*}} [\text{EER}( \btheta_{\text{Phase}_1}, \btheta_T^{*})] \lesssim \frac{1}{n_T} \fverts{\bfP_{\bfX_T  \bfhV } \bfz_T }_2^2 + \epsilon^2   +  \frac{1}{r  n_sm } \sigma^2 \left( km + kdm \log(\kappa n_s) + \log\left(\frac{1}{\delta}\right)  \right)
	\end{align*}
	Finally, the last term in above can be bounded by using a concentration inequality for $\chi^2$-squared distribution. In particular, with probability at least $1 - \frac{3\delta}{5}$ we have $\verts{\bfP_{\bfX_T  \bfhV} \bfz_T }_2^2 \lesssim \sigma^2 (q + \log(\nicefrac{1}{\delta})) $. Thus the following bound holds on $\bbE_{\bfw_T^{*}} [\text{Err}( \bfhB_S , \bfw_T^{*})]$ with probability at least $1 - \delta$:
	
	\begin{align*}
		\bbE_{\bfw_T^{*}} [\text{EER}( \btheta_{\text{Phase}_1}, \btheta_T^{*})]  & \lesssim \epsilon^2  + \frac{1}{n_T} \sigma^2 (q + \log(\nicefrac{1}{\delta})) + \frac{1}{r n_sm } \sigma^2 \left( km + kdm \log(\kappa n_s) + \log\left(\frac{1}{\delta}\right)  \right) \\
		& = \sigma^2 \left[\frac{1}{r n_sm }  \log\left(\frac{1}{\delta}\right)  {+}  \left(\frac{kd \log(\kappa n_s) + k}{r n_s}\right)\right]   + \frac{\sigma^2}{n_T} (q + \log(\nicefrac{1}{\delta}))  {+} \epsilon^2 
	\end{align*}
	
\end{proof}

\subsubsection{Proof of Lemma~\ref{lemm:multi_target_gurantee}} \label{subsubsec:lemm3}
We start the proof by using Proposition~\ref{prop:du-result-target} and Claim~\ref{lemm:target_data}:
\begin{align*}
	\sigma_l^2(\widetilde{\bfW}^{*})\Vert \bfP^{\perp}_{\bfX_T  \bfhV} \bfX_T \bfV^{*} & \Vert_F^2  \lesssim \sigma_l^2(\widetilde{\bfW}^{*}) n_T \fverts{ \bfP^{\perp}_{\boldsymbol{\Sigma}_T^{\nicefrac{1}{2}}  \bfhV} \boldsymbol{\Sigma}_T^{\nicefrac{1}{2}} \bfV^{*} }_F^2 \\
	&  \leq n_T \sum_{i=1}^{m} \fverts{ \bfP^{\perp}_{\boldsymbol{\Sigma}_T^{\nicefrac{1}{2}}  \bfhV} \boldsymbol{\Sigma}_T^{\nicefrac{1}{2}} \bfV^{*}\widetilde{\bfw}_i^{*} }_2^2 \\
\end{align*}
The proof now follows the same procedure as in Proof of Lemma~\ref{lemm:rep-close} starting from Equation~\ref{eq:interim9} and following it up with Lemma~\ref{lemm:multi_source_guarantee}.


\section{Proof for Phase 2 training} \label{sec:proof-phase2}
We now provide a proof for our bound in Theorem~\ref{thm:low-dim-combined} which establishes excess generalization risk for the model obtained after combined Phase 1 and Phase 2 training.

The data for Phase 2 training is given by $(\bfX_{T_2},\bfy_{T_2})$ where $\bfy_{T_2} = \bfX_{T_2} \bfB_{T}^{*} \bfw_{T}^{*} + \bfz_{T_2} $. Here $\btheta_T^{*} := \bfB_T^{*} \bfw^{*}$ denotes the true data generating target model. We define $\bfP_{\parallel} = \bfX_{T_2}^{\top} (\bfX_{T_2}\bfX_{T_2}^{\top})^{-1}\bfX_{T_2}$ as projection matrix on the row-space of matrix $\bfX_{T_2}$ and $\bfP_{\perp} = \mathbf{I} - \bfP_{\parallel}$.\\
We first note the following result that establishes where the Gradient Descent solution converges to, the proof of which is given in Section~\ref{subsubsec:lemm4} below.
\begin{lemma} \label{lemm:gradient-descent-soln}
	Under the assumptions of Theorem~\ref{thm:low-dim-combined}, performing gradient descent on the objective \eqref{eq:fine-tune_obj} with the initialization $\btheta^{(0)} := \btheta_{\text{Phase}_1}$ and learning rate $\eta$, yields the solution:
	\begin{equation*}
		\btheta_{GD} := \btheta^{(\infty)} =  \bfP_{\parallel}\btheta_T^{*} + \bfP_{\perp} \btheta_{\text{Phase}_1} + \bfX_{T_2}^{\top}(\bfX_{T_2} \bfX_{T_2}^{\top})^{-1} \mathbf{z}
	\end{equation*}
	where $\bfP_{\parallel} = \bfX_{T_2}^{\top} (\bfX_{T_2}\bfX_{T_2}^{\top})^{-1}\bfX_{T_2}$ is the projection matrix on the row-space of matrix $\bfX_{T_2}$ and $\bfP_{\perp} = \mathbf{I} - \bfP_{\parallel}$.
\end{lemma}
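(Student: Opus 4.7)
The plan is to exploit the classical fact that gradient descent on a least-squares objective stays in an affine subspace parallel to the row space of the data matrix, so the component of the iterate orthogonal to the row space is frozen at the initialization, while the component in the row space is forced by the zero-loss condition.

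First, I would write out the gradient update explicitly: from $f(\btheta) = \tfrac{1}{n_{T_2}}\|\bfy_{T_2} - \bfX_{T_2}\btheta\|_2^2$ we get $\nabla f(\btheta) = -\tfrac{2}{n_{T_2}}\bfX_{T_2}^{\top}(\bfy_{T_2} - \bfX_{T_2}\btheta)$, so
\begin{equation*}
\btheta^{(t+1)} - \btheta^{(t)} = \tfrac{2\eta}{n_{T_2}}\bfX_{T_2}^{\top}(\bfy_{T_2} - \bfX_{T_2}\btheta^{(t)}) \in \mathcal{C}(\bfX_{T_2}^{\top}).
\end{equation*}
Since every increment lies in the row space of $\bfX_{T_2}$, applying $\bfP_{\perp} := \mathbf{I} - \bfP_{\parallel}$ kills the increment and gives $\bfP_{\perp}\btheta^{(t)} = \bfP_{\perp}\btheta^{(0)} = \bfP_{\perp}\btheta_{\text{Phase}_1}$ for all $t$, hence for the limit $\btheta_{GD}$ as well.

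Next I would pin down the row-space component using Assumption~\ref{assum:GD-convg}, which guarantees $f(\btheta_{GD}) = 0$, i.e.\ $\bfX_{T_2}\btheta_{GD} = \bfy_{T_2}$. Under Assumption~\ref{assum:data-matrix-rows} the rows of $\bfX_{T_2}$ are linearly independent, so $\bfX_{T_2}\bfX_{T_2}^{\top}$ is invertible, and there is a unique vector in $\mathcal{C}(\bfX_{T_2}^{\top})$ that maps to $\bfy_{T_2}$ under $\bfX_{T_2}$, namely $\bfX_{T_2}^{\top}(\bfX_{T_2}\bfX_{T_2}^{\top})^{-1}\bfy_{T_2}$. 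Therefore
\begin{equation*}
\bfP_{\parallel}\btheta_{GD} = \bfX_{T_2}^{\top}(\bfX_{T_2}\bfX_{T_2}^{\top})^{-1}\bfy_{T_2}.
\end{equation*}

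Finally, I would substitute $\bfy_{T_2} = \bfX_{T_2}\btheta_T^{*} + \bfz_{T_2}$ and simplify using $\bfX_{T_2}^{\top}(\bfX_{T_2}\bfX_{T_2}^{\top})^{-1}\bfX_{T_2} = \bfP_{\parallel}$ to obtain $\bfP_{\parallel}\btheta_{GD} = \bfP_{\parallel}\btheta_T^{*} + \bfX_{T_2}^{\top}(\bfX_{T_2}\bfX_{T_2}^{\top})^{-1}\bfz_{T_2}$. Adding this to the orthogonal component $\bfP_{\perp}\btheta_{\text{Phase}_1}$ yields the claimed formula. There is no real obstacle here: the argument is essentially algebraic, and the only subtlety is to justify that GD indeed converges and does so to a zero-loss point, which is precisely what Assumption~\ref{assum:GD-convg} supplies, together with the standard choice $\eta < 2/\lambda_{\max}(\tfrac{2}{n_{T_2}}\bfX_{T_2}^{\top}\bfX_{T_2})$ implicit in that assumption.
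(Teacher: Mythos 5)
Your proposal is correct and follows essentially the same route as the paper: both arguments observe that every GD increment lies in the row space of $\bfX_{T_2}$ (so the orthogonal component is frozen at $\bfP_{\perp}\btheta_{\text{Phase}_1}$), then use the zero-loss condition $\bfX_{T_2}\btheta_{GD}=\bfy_{T_2}$ together with invertibility of $\bfX_{T_2}\bfX_{T_2}^{\top}$ to pin down the row-space component, and finally substitute $\bfy_{T_2}=\bfX_{T_2}\btheta_T^{*}+\bfz_{T_2}$. The only cosmetic difference is that the paper parametrizes the limit as $\btheta_{\text{Phase}_1}+\bfX_{T_2}^{\top}\mathbf{a}$ and solves for the coefficient vector $\mathbf{a}$, whereas you decompose directly via $\bfP_{\parallel}$ and $\bfP_{\perp}$; the algebra is identical.
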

\subsection{Proof of Theorem~\ref{thm:low-dim-combined}}
We now use the solution of the gradient descent $\hat{\btheta}_{T} :=\btheta_{GD}$ derived in Lemma~\ref{lemm:gradient-descent-soln} and find the excess population risk. The excess risk is given by:
\begin{align*}
	\text{EER}& (\hat{\btheta}_{T},\btheta_T^{*})  = \bbE_{\bfX_{T_2} \sim p_T} (\bfX_{T_2}^{\top} \btheta_T^{*} - \bfX_{T_2}^{\top} \btheta_{GD}  )^2 \\
	& \quad =  \bbE_{\bfX_{T_2} \sim p_T} \text{Tr} [(\btheta_{T}^{*} {-} \btheta_{GD})^{\top}\bfX_{T_2} \bfX_{T_2}^{\top}(\btheta_{T}^{*} {-} \btheta_{GD})] \\
	& \quad = \verts{\boldsymbol{\Sigma}_T^{\nicefrac{1}{2}}(\btheta_{T}^{*} - \btheta_{GD})}_2^2
\end{align*}
Substituting the value of $\btheta_{GD}$ from Lemma~\ref{lemm:gradient-descent-soln} we get:
\begin{align} \label{eq:interim5}
	\text{EER}(\hat{\btheta}_{T},\btheta_T^{*}) & \leq 2 \verts{\boldsymbol{\Sigma}_T^{\nicefrac{1}{2}} \bfP_{\perp}(\btheta_{T}^{*} - \btheta_{T_0})  }_2^2   + 2\verts{ \boldsymbol{\Sigma}_T^{\nicefrac{1}{2}} \bfX_{T_2}^{\top}(\bfX_{T_2} \bfX_{T_2}^{\top})^{-1} \mathbf{z} }_2^2
\end{align}
We focus on the first term for now. We have:
\begin{align} \label{eq:interim6}
	&\verts{\boldsymbol{\Sigma}_T^{\nicefrac{1}{2}} \bfP_{\perp}(\btheta_{T}^{*} - \btheta_{T_0})  }_2^2
	= (\btheta_T^{*} - \btheta_{T_0})^{\top} \bfP_{\perp}^{\top} \boldsymbol{\Sigma}_T \bfP_{\perp}(\btheta_{T}^{*} - \btheta_{T_0})  \notag \\
	& = (\btheta_T^{*} - \btheta_{T_0})^{\top} \bfP_{\perp}^{\top} \left(\boldsymbol{\Sigma}_T - \frac{1}{n_T} \bfX_{T_2}^{\top}\bfX_{T_2} \right) \bfP_{\perp}(\btheta_{T}^{*} - \btheta_{T_0})\notag \\
	& = \left\Vert \left(\boldsymbol{\Sigma}_T - \frac{1}{n_T} \bfX_{T_2}^{\top}\bfX_{T_2} \right)^{\nicefrac{1}{2}} \bfP_{\perp}(\btheta_{T}^{*} - \btheta_{T_0})  \right\Vert_2^2 \notag \\
	& \leq \left\Vert \boldsymbol{\Sigma}_T - \frac{1}{n_T} \bfX_{T_2}^{\top}\bfX_{T_2}  \right\Vert_2 \verts{\bfP_{\perp}(\btheta_{T}^{*} - \btheta_{T_0})}_2^2
\end{align}
The second term in \eqref{eq:interim6} can be bounded by $\verts{\bfP_{\perp}(\btheta_{T}^{*} - \btheta_{T_0})}_2^2 \leq \verts{(\btheta_{T}^{*} - \btheta_{T_0})}_2^2$ by noting that $\verts{\bfP_{\perp}}_2 \leq 1$. 
We thus finally get the bound:
\begin{align} \label{eq:interim7}
	\text{EER}(\hat{\btheta}_{T},\btheta_T^{*})& \leq 2 \left\Vert \boldsymbol{\Sigma}_T - \frac{1}{n_T} \bfX_{T_2}^{\top}\bfX_{T_2}  \right\Vert_2  \verts{  (\btheta_{T}^{*} - \btheta_{T_0})}_2^2   + 2\verts{ \boldsymbol{\Sigma}_T^{\nicefrac{1}{2}} \bfX_{T_2}^{\top}(\bfX_{T_2} \bfX_{T_2}^{\top})^{-1} \mathbf{z} }_2^2
\end{align}
We now provide a bound for the second term in \eqref{eq:interim7}. Note:
\begin{align*} 
	& \verts{ \boldsymbol{\Sigma}_T^{\nicefrac{1}{2}} \bfX_{T_2}^{\top}(\bfX_{T_2} \bfX_{T_2}^{\top})^{-1} \mathbf{z} }_2^2  =  \mathbf{z}^{\top} (\bfX_{T_2} \bfX_{T_2}^{\top})^{-1}  \bfX_{T_2} \boldsymbol{\Sigma}_T \bfX_{T_2}^{\top}(\bfX_{T_2} \bfX_{T_2}^{\top})^{-1} \mathbf{z}
\end{align*}
From \cite[Lemma 9]{bartlett2020benign}, we can get a high probability bound (probability $>1 - e^{-t}$) on this term for some $t >0$ as
\begin{align*}
	& \verts{ \boldsymbol{\Sigma}_T^{\nicefrac{1}{2}} \bfX_{T_2}^{\top}(\bfX_{T_2} \bfX_{T_2}^{\top})^{-1} \mathbf{z} }_2^2   \leq (4t+2) \sigma^2 \text{Tr}\left( (\bfX_{T_2} \bfX_{T_2}^{\top})^{-1}  \bfX_{T_2} \boldsymbol{\Sigma}_T \bfX_{T_2}^{\top}(\bfX_{T_2} \bfX_{T_2}^{\top})^{-1} \right) 
\end{align*}
To bound the trace term, we use  \cite[Lemma 13, 18]{bartlett2020benign}: For universal constant $b,c \geq1$ and $k^{*} := \min \{k\geq 0 : r_k(\boldsymbol{\Sigma}_T) \geq bn\}$, we have 
\begin{align*}
	&\textbf{Tr}\left( (\bfX_{T_2} \bfX_{T_2}^{\top})^{-1}  \bfX_{T_2} \boldsymbol{\Sigma}_T \bfX_{T_2}^{\top}(\bfX_{T_2} \bfX_{T_2}^{\top})^{-1} \right) \leq c \left( \frac{k^{*}}{bn} {+} \frac{bn}{ R_{k^{*}}(\boldsymbol{\Sigma}_T) } \right)
\end{align*}
where $r_k(\boldsymbol{\Sigma}_T) = \frac{\boldsymbol{\Sigma}_{i>k} \lambda_i}{\lambda_{k+1}} $, $R_{k^{*}}(\boldsymbol{\Sigma}_T) = \frac{(\boldsymbol{\Sigma}_{i>k} \lambda_i)^2}{\boldsymbol{\Sigma}_{i>k} \lambda_i^2} $.
Substituting the value of $t = \log\left(\frac{2}{\delta}\right)$
Plugging the resulting bound in \eqref{eq:interim7}, we can finally claim that the following holds with probability at least $1-\frac{\delta}{2}$:
\begin{align*}
	\text{EER}(\hat{\btheta}_{T},\btheta_T^{*}) & \lesssim  \left\Vert \boldsymbol{\Sigma}_T - \frac{1}{n_T} \bfX_{T_2}^{\top}\bfX_{T_2}  \right\Vert_2  \verts{  (\btheta_{T}^{*} - \btheta_{T_0})}_2^2  + c \sigma^2 \log\left(\frac{1}{\delta}\right)  \left( \frac{k^{*}}{bn} + \frac{bn}{ R_{k^{*}}(\boldsymbol{\Sigma}_T) } \right)
\end{align*}
The covariance approximation error in the first term of above can be bounded by the result in \cite[Theorem 9]{koltchinskii2017concentration}. This yields the following bound on the approximation with probability at least $1- e^{-\delta_1}$ over the choice of data matrix $\bfX_{T_2}$ for some constant $u >0$ and $\delta_1 >1$
\begin{align*}
	\left\Vert \boldsymbol{\Sigma}_T {-} \frac{1}{n_T} \bfX_{T_2}^{\top} \bfX_{T_2} \right\Vert_2 & \leq u \lambda_1 \max \left\{ \sqrt{\frac{\sum_{i=1}^{d} \lambda_i }{n_T \lambda_1}}, \frac{\sum_{i=1}^{d} \lambda_i }{n_T \lambda_1}, \sqrt{\frac{\delta_1}{n_T}},\frac{\delta_1}{n_T}   \right\}
\end{align*}
Substituting $\delta_1 = \log \left(\frac{2}{\delta} \right)$, with probability at least $1 - \frac{\delta}{2}$,

\begin{align} \label{eq:interim8}
	& \left\Vert \boldsymbol{\Sigma}_T - \frac{1}{n_T} \bfX_{T_2}^{\top}  \bfX_{T_2} \right\Vert_2  \leq u \lambda_1 \max \left\{ \sqrt{\frac{\sum_{i=1}^{d} \lambda_i }{n_T \lambda_1}},  \frac{\sum_{i=1}^{d} \lambda_i }{n_T \lambda_1}, \sqrt{\frac{1}{n_T} \log\left(\frac{1}{\delta}\right) },\frac{1}{n_T} \log \left(\frac{1}{\delta}\right)   \right\}
\end{align}
Denote the eigenvalues of covariance matrix of the target data as $\{ \lambda_i\}_{i=1}^{d}$, with $\lambda_1 \geq \hdots \lambda_d$, we then have:
\begin{align*}
	\text{EER}(\hat{\btheta}_{T},\btheta_T^{*}) & \lesssim \left\Vert \boldsymbol{\Sigma}_T - \frac{1}{n_{T_2}} \bfX_{T_2}^{\top}\bfX_{T_2}  \right\Vert_2  \frac{1}{\lambda_d} \verts{ \boldsymbol{\Sigma}_T (\btheta_{T}^{*} - \btheta_{T_0})}_2^2  + c \sigma^2 \log\left(\frac{1}{\delta}\right)  \left( \frac{k^{*}}{bn_{T_2}} + \frac{bn_{T_2}}{ R_{k^{*}}(\boldsymbol{\Sigma}_T) } \right)
\end{align*}
where $r_k(\boldsymbol{\Sigma}_T) = \frac{\boldsymbol{\Sigma}_{i>k} \lambda_i}{\lambda_{k+1}} $, $R_{k^{*}}(\boldsymbol{\Sigma}_T) = \frac{(\boldsymbol{\Sigma}_{i>k} \lambda_i)^2}{\boldsymbol{\Sigma}_{i>k} \lambda_i^2} $.	Here, constant $b>1$ and $k^{*} = \min \{ k \geq 0: r_k(\boldsymbol{\Sigma}) \geq bn \}$ and the covariance estimation term can be bounded by: $$
\left\Vert \boldsymbol{\Sigma}_T - \frac{1}{n_{T_2}} \bfX_{T_2}^{\top} \bfX_{T_2} \right\Vert_2 \leq u \lambda_1 \max \left\{ \sqrt{\frac{\sum_{i=1}^{d} \lambda_i }{n_{T_2} \lambda_1}},\frac{\sum_{i=1}^{d} \lambda_i }{n_{T_2} \lambda_1}, \sqrt{\frac{1}{n_{T_2}} \log\left(\frac{1}{\delta}\right) },\frac{1}{n_{T_2}} \log \left(\frac{1}{\delta}\right)   \right\}
$$ with probability at least $1 - \frac{\delta}{2}$. 
We now substitute the value of $\verts{ \boldsymbol{\Sigma}_T (\btheta_{T}^{*} - \btheta_{T_0})}_2^2$ from Theorem~\ref{thm:low-dim-phase1} (as $\btheta_{T_0} = \bfhV \bfhw_T$ which was obtained by using $n_{T_1}$ target samples). Thus the final bound after Phase 1 and Phase 2 training, after taking the expectation w.r.t the target model $\btheta_{T}^{*}$, is given by:
\begin{align*}
	& \bbE [\text{EER}(\hat{\btheta}_{T},\btheta_T^{*})] \\
	& \lesssim \left\Vert \boldsymbol{\Sigma}_T - \frac{1}{n_{T_2}} \bfX_{T_2}^{\top} \bfX_{T_2} \right\Vert_2 \frac{\sigma^2}{\lambda_d}  \frac{1}{r n_sm}  \log\left(\frac{1}{\delta}\right)  + \left\Vert \boldsymbol{\Sigma}_T - \frac{1}{n_{T_2}} \bfX_{T_2}^{\top} \bfX_{T_2} \right\Vert_2 \frac{\sigma^2}{\lambda_d} \left(\frac{kd \log(\kappa n_s) + k}{r n_s}\right) \\
	& \qquad + \left\Vert \boldsymbol{\Sigma}_T - \frac{1}{n_{T_2}} \bfX_{T_2}^{\top} \bfX_{T_2} \right\Vert_2 \frac{1}{\lambda_d} \left(\sigma^2 \left[\frac{1}{n_{T_1}} ( q  + \log(\nicefrac{1}{\delta}))\right]  + \epsilon^2\right)  + c \sigma^2 \log\left(\frac{1}{\delta}\right)  \left( \frac{k^{*}}{bn_{T_2}} + \frac{bn_{T_2}}{ R_{k^{*}}(\boldsymbol{\Sigma}_T) } \right) 
\end{align*}
where $u,c$ are universal constants. We now substitute the bound for the covariance estimate from \eqref{eq:interim8} and simplfy the expression by assuming $\frac{r_0(\boldsymbol{\Sigma}_T) }{n_{T_2}} \geq \frac{1}{n_{T_2}} \log \left(\frac{1}{\delta}\right) \geq 1$ since we have a few target samples:
\begin{align*}
	\bbE [\text{EER}(\hat{\btheta}_{T},\btheta_T^{*})] & \leq \frac{u \lambda_1 }{\lambda_d}  \frac{r_0(\boldsymbol{\Sigma}_T) }{n_{T_2}}   \left(\frac{\sigma^2 }{n_{T_1}} ( q  + \log(\nicefrac{1}{\delta})) + \epsilon^2 \right) + \frac{u \lambda_1 \sigma^2 }{\lambda_d} \frac{r_0(\boldsymbol{\Sigma}_T) }{n_{T_2}} \left(\frac{1}{r n_sm}  \log\left(\frac{1}{\delta}\right)  +  \left(\frac{kd \log(\kappa n_s) + k}{r n_s}\right)\right) \\
	& + c \sigma^2 \log\left(\frac{1}{\delta}\right)  \left( \frac{k^{*}}{bn_{T_2}} + \frac{bn_{T_2}}{ R_{k^{*}}(\boldsymbol{\Sigma}_T) } \right)  \\
\end{align*}

\subsubsection{Proof of Lemma~\ref{lemm:gradient-descent-soln}} \label{subsubsec:lemm4}

For any time step $t$ of the gradient descent process, the gradient of the objective in \eqref{eq:fine-tune_obj} evaluated at $\btheta^{(t)}$ is given by:
\begin{align*}
	\nabla f(\btheta^{(t)}) = \frac{2}{n_T} \bfX_{T_2}^{\top} ( \bfX_{T_2} \btheta^{(t)} - \bfy_{T_2} ) 
\end{align*}
The gradient update step using step size $\eta$ is given by:
\begin{align*}
	\btheta^{(t+1)} 
	& =  \btheta^{(t)} - \frac{2\eta}{n_T}\bfX_{T_2}^{\top} ( \bfX_{T_2} \btheta^{(t)} - \bfy_{T_2} ) \\
	& = \btheta_{T_0} + \bfX_{T_2}^{\top} \mathbf{a}^{(t)} 
\end{align*}
where $\mathbf{a}$ is some vector in $\bbR^{n_T}$. In the limit, the gradient descent convergence to a solution of the form:
\begin{align} \label{eq:interim4}
	\btheta^{(\infty)} = \btheta_{T_0} + \bfX_{T_2}^{\top} \mathbf{a}
\end{align}
Since the problem in \eqref{eq:fine-tune_obj} is over-parameterized, there exists a value $\btheta^{*}$ such that $f(\btheta^{*})=0$. This follows from the fact that $\bfX_{T_2}$ has full row rank. The gradient descent solution, under an appropriate choice of the learning rate, thus converges to this value while yield a zero loss, implying:
\begin{align*}
	& \bfX_{T_2} \btheta^{(\infty)}  = \bfy_{T_2} 
	= \bfX_{T_2} \btheta_T^{*} + \mathbf{z}_{T_2} \\
	\Rightarrow & \bfX_{T_2} (\btheta_{T_0} + \bfX_{T_2}^{\top} \mathbf{a}) = \bfX_{T_2} \btheta_T^{*} + \mathbf{z}_{T_2} \\
	\Rightarrow & \mathbf{a} = (\bfX_{T_2} \bfX_{T_2}^{\top})^{-1} (\bfX_{T_2} (\btheta^{*}_T - \btheta_{T_0}) + \mathbf{z}_{T_2})
\end{align*}
Substituting this in \eqref{eq:interim4}, we get
\begin{align*}
	\btheta_{GD} & := \btheta^{(\infty)} \\
	& = \btheta_{T_0} + \bfX_{T_2}^{\top}(\bfX_{T_2} \bfX_{T_2}^{\top})^{-1} (\bfX_{T_2} (\btheta^{*}_T - \btheta_{T_0}) + \mathbf{z}_{T_2}) \\
	& = \bfP_{\parallel}\btheta_T^{*} + \bfP_{\perp} \btheta_{T_0} + \bfX_{T_2}^{\top}(\bfX_{T_2} \bfX_{T_2}^{\top})^{-1} \mathbf{z}_{T_2}
\end{align*}


\section{Numerical Results}\label{sec:expts}
We now provide numerical simulations for our proposed scheme for optimizing linear regression objectives in a data scarce regime. To demonstrate the effectiveness of leveraging pre-trained representations and fine-tuning, we consider the case where we have access to the true representation matrix $\bfV^{*}$ formed by the source representations. We compare the performance of models obtained after Phase 1 and Phase 2 training for different parameters of interest and discuss their sample complexity requirements. 

\subsection{Setup}
We generate the $d \times q$ matrix $\bfV^{*}$ matrix with entries sampled from the standard normal distribution, with $d=1000$ and $q = 50$.
We generate $n_{T_1} \in \{100, 200, 300, 1000\}$ number of samples for the target data for Phase 1 to form the matrix $\bfX_{T_{1}}$, which is generated with i.i.d Gaussian entries with mean 0 and covariance matrix $\Sigma_T$. To simulate slowly decaying eigenvalues of $\Sigma_{T}$, we set them as $\lambda_j = e^{\frac{-j}{\tau}} + \varepsilon$ for $j \in [d]$, where $\tau = 1$ is the decay factor and $\varepsilon = 0.0001$.
 The true target model $\btheta_{T}^{*}$ is generated as $\bfu+\mathbf{v}$ where $\bfu \in \bbR^{d}$ lies in the span of $\bfV^{*}$ and $\mathbf{v}$ is Gaussian vector with covariance matrix $\text{I}\sigma_{T}^2$ and zero mean. We call the expected ratio of $\bfu$ to $\mathbf{v}$ as the in-out mixture representation ratio.\footnote{The parameter $\sigma_T^{2}$ indirectly enables us simulate the value of $\epsilon$ in Assumption~\ref{assump:source_target_map}, with larger values of $\sigma_T^{2}$ implying that $\btheta_{T}^{*}$ lies farther away from the subspace spanned by the columns of $\bfV^{*}$. A higher $\sigma_T^2$ values thus yields a small value for the in-out representation mixture ratio.} The target output vector $\bfy_T$ is generated as $\bfy_{T_1} = \bfX_T \btheta_{T}^{*} + \bfz_{T_1}$, with $\bfz_{T_1}$ being a Gaussian noise vector. Phase 1 training thus seeks to minimize the objective in \eqref{eq:target_phase1} (with $\bfhV$ replaced by $\bfV^{*}$, since we assume access to complete source representations) and we denote the output model $\btheta_{\text{Phase}_1}$. For Phase 2 training, we optimize the objective in \eqref{eq:fine-tune_obj} using new $n_{T_{2}} \in \{100,200,300,1000\}$ number of target samples with $\btheta_{\text{Phase}_1}$ as initialization to obtain the final model $\btheta_{\text{Phase}_2}$. We compare the performance of $\btheta_{\text{Phase}_1}$ and $\btheta_{\text{Phase}_2}$ on 500 test samples generated from the target data. As a baseline, we also consider the performance of the scheme which takes $n_{T_1} + n_{T_2}$ number of target samples and trains the model from scratch, i.e., without leveraging the source representations $\bfV^{*}$. We denote the model obtained form this scheme as $\btheta_0$.

\begin{table*}[htb]
	\footnotesize
	\caption{Performance comparison for learned models after Phase 1 (Pre-training), Phase 2 (Fine-tuning) and learning from Scratch.}
	\label{tab:expts}
	\centering
	\begin{tabular}{ccccc}
		\toprule
		{\textbf{Sample Configuration}} & \textbf{In-Out Representation Mixture Ratio for $\btheta_{T}^{*}$ (in dB)} &  \textbf{Phase 1} & \textbf{Phase 2} & \textbf{Scratch} \\
		\midrule 
		\multirow{5}{*}{$n_{T_{1}} = 100, n_{T_{2}} = 100$}
		& 50 &	1.01	& 1.01 &	10.17	
		\\
		& 20 &	3.21	& 3.05 &	10.41	
		\\
		& 10 &	9.97	& 9.17 &	10.98	
		\\
		& 5 &	16.13	& 15.84 &	12.74	
		\\
		& 1 &	28.01	& 26.61 &	15.96	
		\\
		\midrule
		\multirow{5}{*}{$n_{T_{1}} = 200, n_{T_{2}} = 200$}
		& 50 &	1.01	& 1.00 &	9.71	
		\\
		& 20 &	2.03	& 1.88 &	9.33
		\\
		& 10 &	5.71	& 5.14 &	10.33
		\\
		& 5 &	10.09	& 9.10 &	11.79
		\\
		& 1 &	15.30	& 13.62 &	13.68
		\\
		\midrule
		\multirow{5}{*}{$n_{T_{1}} = 300, n_{T_{2}} = 300$}
		& 50 &	1.01	& 1.00 &	8.13	
		\\
		& 20 &	1.90	& 1.69 &	8.35
		\\
		& 10 &	5.30	& 4.46 &	9.07
		\\
		& 5 &	9.12	& 7.60 &	9.95
		\\
		& 1 &	14.55	& 12.12 &	12.13
		\\
		\midrule
		\multirow{5}{*}{$n_{T_{1}} = 1000, n_{T_{2}} = 1000$}
		& 50 &	1.01	& 1.01 &	1.01	
		\\
		& 20 &	1.74	& 1.01 &	1.01	
		\\
		& 10 &	4.72	& 1.09 &	1.01	
		\\
		& 5 &	7.92	& 1.12 &	1.01	
		\\
		& 1 &	12.47	& 1.21 &	1.01	
		\\
		\bottomrule
	\end{tabular}
\end{table*}

\subsection{Results}
The results from Phase 1 and Phase 2 training for different splits for the number of phase target samples and $\epsilon$ values\footnote{The values in the `In-Out Representation Mixture Ratio' column in Table~\ref{tab:expts} correspond to the ratio of the signal in the subspace spanned by columns of $\bfV^{*}$ and the added out of subspace signal (of variance $\sigma_{T}^2$) added to it to generate the true target model $\btheta_{T}^*$. Lower values of this ratio correspond to higher values of $\sigma_T^2$.} are shown in Table~\ref{tab:expts}. The numerical values, which are averaged over 10 independent runs, denote the ratio of the error obtained by the learned model after the respective phase and the error of the underlying true data generation model $\btheta_{T}^*$ for the target data on a test dataset. In a data-scarce regime, $(n_{T_1},n_{T_2}) \in \{(100,100),(200,200), (300,300)\}$, the performance of the model learned from scratch (without leveraging source representations; denoted by column \emph{Scratch}) can be unsatisfactory. As expected, even pre-training (Phase 1) and fine-tuning (Phase 2) in low data regimes does not yield good performance if the source models are not useful for the target data, which is the case when the in-representation mixture to out-mixture ratio for $\btheta_{T}^*$ is small, as shown by the performance for values 5dB, 1dB in Table~\ref{tab:expts}. However, utilizing source representations gives significantly better performance relative to training from scratch in cases when $\btheta_{T}^*$ doesn't lie far off from $\bV^*$ as can be seen by comparing the values of Phase 2 and Scratch training results for in-out signal mixture ratio of 50dB, 20dB, 10dB. Thus leveraging source representations for target training can be beneficial in scare-data regimes when source representation are useful for the target task and thus representation transfer is practical. It can be seen that training from scratch could perform well for a data-rich regime, $(n_{T_1},n_{T_2}) = (1000,1000)$. Here, the performance of the learned model after Phase 1 degrades with decreasing in-out representation mixture ratio as the source representations become less useful to learn $\btheta_{T}^*$. Meanwhile, performing fine-tuning in addition to utilizing source representations, as in Phase 2, yields much better performance of the overall learned model with relative errors much less than after Phase 1. Thus, fine-tuning on target data (Phase 2) can be essential in addition to leveraging source models directly by pre-training (Phase 1) when the true model $\btheta_{T}^*$ lies farther away from the subspace spanned by the source representations.
\enlargethispage{-1.4cm} 


\section{Conclusion}
\label{sec:conclusion}
In this work, we proposed a method for training linear regression models via representation transfer learning in the limited sample regime, when given access to multiple pre-trained linear models trained on data domains (sources) different form the target of interest. We established excess risk bounds for the learned target model when (i) source representations are used directly to construct a target representation and adapted to the target task, and (ii) when the entire resulting model is fine-tuned in the over-parameterized regime using target task samples. Our bounds showed a gain in target sample complexity compared to the baseline case of learning without access to the pre-trained models, thus demonstrating the benefit of transfer learning for better generalization in the limited sample regime.
Our provided numerical results corroborated this fact and showed superior performance of our proposed scheme compared to learning from scratch in data-scare regimes. 

As future extensions to this work, it is of interest to see how non-linear activation functions can be introduced in the model to analyze more complicated architectures like Neural Networks (NNs). Analyzing representation transfer learning with multiple NNs and utilizing recently developed results in benign over-fitting for this setting \cite{chatterji2022interplay} is an interesting next step. In many scenarios of interest, for training the source task models, \emph{unlabeled} data from the target distribution might be available. While there are empirical works utilizing unlabeled samples in the context of semi-supervised adaptation \cite{singh2021clda,mishra2021surprisingly, zhang2019bridging}, theoretical results on understanding generalization of representation transfer learning methods (with pre-training/fine-tuning) and their sample complexity requirements are missing and would be an interesting direction to pursue.   


\section*{Acknowledgments}
This work was supported in part by NSF grants 2139304,  2146838 and 2007714, and Army Research Laboratory grant under Cooperative Agreement W911NF-17-2-0196.



\bibliographystyle{IEEEtran}
\bibliography{ref}
\clearpage

\end{document}